\documentclass{article}


\usepackage{amsmath,amsthm,amssymb}
\usepackage{graphicx}
\usepackage{xcolor}
\usepackage{algorithm} 
\usepackage{algpseudocode} 
\usepackage{subcaption}
\usepackage{pifont}
\usepackage{wrapfig}
\usepackage{todonotes}

\usepackage[nonatbib,preprint]{neurips_2025}

\newtheorem{theorem}{Theorem}
\newtheorem{proposition}{Proposition}
\newtheorem{lemma}{Lemma}
\newtheorem{corollary}{Corollary}

\newtheorem{remark}{Remark}

\newtheorem{definition}{Definition}

\newcommand{\wv}{\widehat{v}}

\newcommand{\ww}{\widehat{w}}
\newcommand{\wW}{\widehat{W}}

\newcommand{\wx}{\widehat{x}}

\newcommand{\wy}{\widehat{y}}

\newcommand{\wa}{\widehat{a}}

\newcommand{\wdelta}{\widehat{\delta}}
\newcommand{\walpha}{\widehat{\alpha}}

\definecolor{tabgreen}{rgb}{0.1725, 0.6275, 0.1725}
\definecolor{tabred}{rgb}{0.8549, 0.2431, 0.2471}
\definecolor{tabblue}{rgb}{0.1216, 0.4667, 0.7059}

\definecolor{googleblue}{rgb}{0.066, 0.332, 0.796}
\definecolor{googlepurple}{rgb}{0.453, 0.105, 0.277}

\definecolor{setgreen}{rgb}{0.392,0.769,0.643}
\definecolor{setorange}{rgb}{0.988,0.553,0.396}

\definecolor{newblue}{rgb}{0.2235, 0.5020, 0.7255}
\definecolor{neworange}{rgb}{0.9961, 0.4980, 0.0000}

\newcommand{\cmark}{\ding{51}}
\newcommand{\xmark}{\ding{55}}

\newcommand{\syntaxverification}{\textsc{SyntaxVerification}}




\usepackage[utf8]{inputenc} 
\usepackage[T1]{fontenc}    
\usepackage{hyperref}       
\usepackage{url}            
\usepackage{booktabs}       
\usepackage{amsfonts}       
\usepackage{nicefrac}       
\usepackage{microtype}      
\usepackage{xcolor}         
\usepackage{amsmath}
\usepackage{tcolorbox}
\tcbuselibrary{listings}

\title{Continuity and Isolation Lead to Doubts or Dilemmas in Large Language Models\thanks{H.P. was supported by ANID Fondecyt Regular grant 1230507 from Chile. Urrutia, Jimenez, Calderon, Rojas, and Kozachinskiy are funded by y the National Center for Artifcial Intelligence CENIA FB210017, Basal ANID. Kozachinskiy is supported by ANID Fondecyt Iniciación grant 11250060. The authors are sincerely grateful to Petar Veličković for inspiring discussions and insightful comments. }}

%

\author{%
  Hector Pasten
    \\
  Pontifical Catholic University of Chile,\\
  Faculty of Mathematics\\
  \texttt{hector.pasten@uc.cl} \\
   \And
   Felipe Urrutia \\
   University of Chile \& CENIA \\
   \texttt{furrutia@dim.uchile.cl} \\
   \And
   Hector Jimenez \\
   University of Chile \& CENIA \\
   \texttt{hjimenez@dcc.uchile.cl} \\
   \And
   Cristian B. Calderon \\
    CENIA \\
   \texttt{cristian.buc@cenia.cl} \\
   \And
  Cristóbal Rojas \\
    Pontifical Catholic University of Chile\\
    \& CENIA \\
   \texttt{email} \\
    \And
  Alexander Kozachinskiy \\
    CENIA \\
       \texttt{kozmath@proton.me} \\
}

\begin{document}

\maketitle

\begin{abstract}
  Understanding how Transformers work and how they process information is key to the theoretical and empirical advancement of these machines. In this work, we demonstrate the existence of two phenomena in Transformers, namely \textit{isolation} and \textit{continuity}. Both of these phenomena hinder  Transformers to learn even simple pattern sequences. Isolation expresses that any learnable sequence must be isolated from another learnable sequence, and hence some sequences cannot be learned by a single Transformer at the same time. Continuity entails that an attractor basin forms around a learned sequence, such that any sequence falling in that basin will collapse towards the learned sequence. Here, we mathematically prove these phenomena emerge in all Transformers that use compact positional encoding, and design rigorous experiments, demonstrating that the theoretical limitations we shed light on occur on the practical scale.
\end{abstract}

\section{Introduction}

The massive adoption of generative artificial intelligence (AI), and in particular the use of Large Language Models (LLMs), has lead to a growing interest in understanding how these machines work and what they can and cannot compute \cite{garg2022can}. While this endeavor has been primarily addressed from an empirical, task performance perspective \cite{mahowald2024dissociating,ku2025using,bubeck2023sparks}, it has now become clear that a fundamental theoretical understanding 
 of the Transformer (the architecture behind the success of LLM-based applications \cite{vaswani2017attention}) is a crucial step towards explaining and overcoming the observed limitations. 
\cite{bhattamishra2020computational,velivckovic2024softmax,hahn2024sensitive}. This paper is a contribution to a recent trend of work devoted to the development of such an  understanding by revealing the mathematical properties of Transformers \cite{strobl2024formal} that underlie these limitations. 


\subsection{Our contribution}
In this work, we identify two fundamental properties of Transformers: \emph{Isolation} and \emph{continuity} (see below), which severely constrain their core abilities to display (even mild) intelligent behavior.  

As a case study, consider the following scenario inspired by tests such as IQ.   
Given a sequence of symbols generated by a simple rule, an ``intelligent agent'' (a human or an LLM) is presented with an initial finite portion of the sequence and asked to guess the next symbol. The agent is allowed to request to see more symbols in the sequence before providing an answer. Thus, the challenge is to eventually understand the underlying pattern and apply it. 

For example, if the presented sequence is $\text{``}00000000000000000000\text{''},$
we would expect the agent to understand the underlying \textbf{constant pattern} and answer ``0''.
If the presented sequence is now 
$\text{``}1231231231231231231 \text{''},$
we would still expect the agent to understand its \textbf{periodic pattern} and answer ``2''. A slightly more complicated sequence would be 
$\text{``}10100100010\text{''}$.
What is the underlying pattern?  perhaps this sequence seems too short to confidently recognize it, so we request some more symbols and get 
\begin{equation}
\label{eq_inc}    \text{``}10100100010000100000100000\text{''}
\end{equation}
The pattern now is much more clear, let us call it the \textbf{increasing spacing pattern}: the 1s are separated by growing blocks of 0s, each time with one more 0. Before the last 1, we see five 0s, so we expect the agent to answer ``0'' to complete, after the last 1, a block with six 0s.

Given the immense practical success of LLMs, one could think that these kinds of simple patterns would be easy for them to recognize. In this work, we argue that this is not the case for a large class of Transformers that includes most of the modern LLMs. Namely, we show that for  \emph{decoder-only Transformers with compact positional encoding (CPE)}, \textbf{the set of sequences that they can ``learn'' in the above sense is rather limited.} As we will see below, these limitations have several strong practical implications. 


We consider Transformers that, when presented with some prompt, compute a probability distribution over the set of tokens. Now, let us say that a Transformer $T$ \emph{eventually learns} an infinite sequence of symbols $\alpha = \alpha_1 \alpha_2 \alpha_3\ldots$ if, for any long enough prefix of $\alpha$ that is presented to $T$,  when asked ``what is the next symbol in the sequence?'', the token that has the highest probability according to the distribution computed by $T$ is the symbol in $\alpha$ that comes after this prefix. 


Here, we require that the probability of the most likely token is at least some positive constant larger than the probability of any other token. This constant may depend on $\alpha$ and can be arbitrarily small, but it has to be independent of the length of the prompt. This is exactly when the top probability can be made arbitrarily close to 1 by setting the temperature to some small but fixed positive constant, making sure that $T$ outputs the correct prediction with high probability.

\paragraph{Isolation.} We establish a phenomenon of \emph{isolation} in the learnability landscape of any decoder-only CPE Transformer $T$. More specifically,  we show that any infinite sequence $\alpha$ that is eventually learned by $T$ must necessarily be \emph{isolated} from any other infinite sequence that is also eventually learned by $T$.  This means that there is a \emph{ball} of positive radius $\delta$ around $\alpha$ in the space of infinite sequences such that no other sequence within this ball (except for those that differ from $\alpha$ in only finitely many places) can also be eventually learned by $T$. The ball is taken with respect to the relative Hamming distance, that is, it consists of all sequences that differ from $\alpha$ in a set of positions whose asymptotic frequency is at most $\delta$. This phenomenon is illustrated in Figure \ref{fig:transformer_isolation_continuity}a.

\paragraph{Representational collapse and continuity.} What prevents a ball around $\alpha$ from containing other sequences that are also learnable by $T$ is a strong form of \emph{representational collapse} within the ball. Namely, for any sequence $\beta$ within the ball, the output distributions of $T$ on any two long enough prefixes of $\alpha$ and $\beta$ of the same length, will be so close that the top-probability token will be the same for both prefixes, namely the next symbol of $\alpha$. Now, if $\beta$ differs from $\alpha$ at infinitely many places, $T$ must necessarily make infinitely many mistakes in predicting $\beta$, which is therefore not eventually learned by $T$.

 Such representational collapse follows from our main technical result, which is \emph{continuity} of decoder-only CPE transformers. Intuitively, continuity means that making some small modifications to a prompt cannot uncontrollably change the distribution computed by a Transformer. More precisely, as long as $T$ is a decoder-only CPE Transformer, for any $\varepsilon > 0$ there exists a threshold $\delta > 0$ such that, to produce a change by more than $\varepsilon$ in the distribution computed by $T$ on a given prompt, it is necessary to change at least a $\delta$-fraction of the tokens in the prompt (excluding the last token, which we always assume remains unchanged). Notably, the value of $\delta$ for a given $\varepsilon$ depends only on $T$. See Figure \ref{fig:transformer_isolation_continuity}b for an illustration. 

\begin{figure*}[htp!]
\centering
    \includegraphics[width=\textwidth]{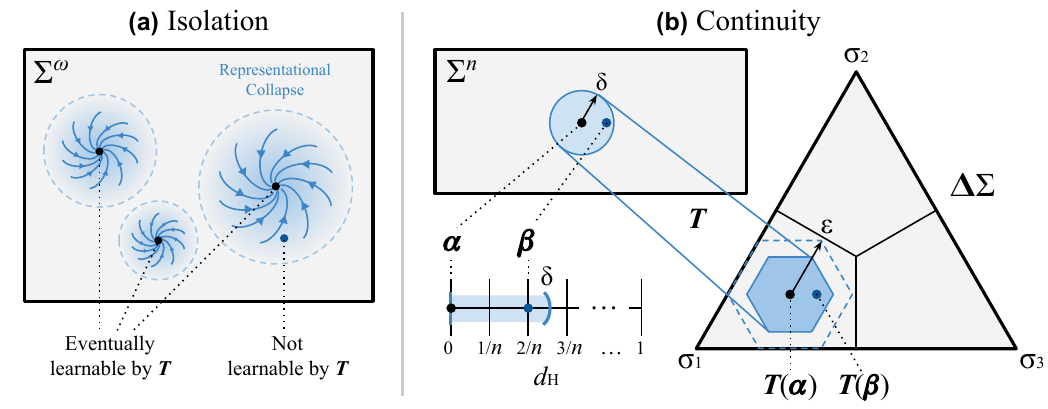}
    \caption{
        \textbf{Isolation and Continuity in decoder-only Transformers.} 
        \textbf{(a) Isolation:} Illustration of the isolation phenomenon in the space of infinite sequences $\Sigma^\omega$. Only a few sequences (\textbf{black} dots) are eventually learnable by the Transformer $T$, and each is surrounded by a region where no other distinct sequence is learnable (\textcolor{googleblue}{\textbf{blue}} dot).
        \textbf{(b) Continuity:} This figure illustrates how two similar input sequences $\alpha, \beta \in \Sigma^n$ of length $n \geq 3$ over the alphabet $\Sigma = \{\sigma_1,\sigma_2, \sigma_3\}$, which share the final token ($\alpha_n = \beta_n$) and differ in two positions ($d_H(\alpha, \beta) = 2/n \leq \delta$), are mapped by the Transformer $T$ to probability distributions with at most $\epsilon$ distance in the simplex $\Delta(\Sigma)$.
    }
    \label{fig:transformer_isolation_continuity}
\end{figure*}

\paragraph{Implications.}
It is not hard to see that for any particular infinite sequence $\alpha$, there exists a decoder-only CPE Transformer that eventually learns it (one can store $\alpha$ inside the positional encoding). Arguably, a Transformer tailored like this to learn a single sequence is not very useful. We show that learning even a single additional sequence might already be problematic for LLMs. For example,
isolation implies that there is no decoder-only CPE Transformer that can eventually learn both the all-0 sequence and the increasing spacing pattern sequence illustrated in equation \eqref{eq_inc}. Indeed, although they differ at infinitely many positions, the frequency of positions where they differ converges to $0$. This means that any positive-radius ball around the all-0 sequence must also contain the increasing spacing sequence from \eqref{eq_inc}. Hence, by isolation, no decoder-only CPE Transformer $T$ can eventually lean both sequences. As a result,
 we see that not every pair of infinite sequences $\alpha$ and $\beta$ can be eventually learned by a single decoder-only CPE Transformer.

One may argue that the pattern in \eqref{eq_inc} is too complicated, since it essentially boils down to counting, a task in which Transformers have been reported to struggle before \cite{barbero2024transformers,wei2022emergent}.  
But what about periodic sequences? They are arguably the simplest infinite family of patterns. 
In fact, one can show that for any \emph{finite} set of periodic sequences, there exists a decoder-only CPE Transformer $T$ that eventually learns them all. 
However, our results imply that the set of \emph{all periodic sequences} is impossible to be eventually learned by any single Transformer. Indeed, for any $T$ that learns the all-0 sequence there is, by isolation, a ball of some positive radius $\delta$ around the all-0 sequence such that no other sequence in this ball that contains infinitely many 1's can also be eventually learned by $T$. But for any $k > 1/\delta$, this ball includes the following  periodic sequence:
\begin{align}
    \beta = \underbrace{00\ldots 01}_k\underbrace{00\ldots 01}_k\underbrace{00\ldots 01}_k\ldots,\label{critic_period}
\end{align}
which therefore cannot be learned by $T$.  

More generally, let us consider the concept of \emph{overwhelming strings} introduced in \cite{stambler2025provably}. For a given transformer $T$, a string of tokens
$s$, a fixed final token $q$, and an integer $m$, we say that $T$ is “overwhelmed” by $s$ if the output of $T$ evaluated on $s$ plus any additional string $t$ and fixed
final token $q$, $T(s + t + q)$, is the same regardless of the chosen string $t$, as long as its length is at most $m$. In other words, being overwhelmed by some string $s$ means that $T$ is completely insensitive to the part of the prompt that contains $t$. As pointed out by \cite{stambler2025provably}, this has several important practical consequences. For instance, it can be used to show "no-go" results in prompt engineering, or to prove severe limitations of transformers to compute highly sensitive functions (such as PARITY). As an immediate consequence of our work, it follows that if $\alpha$ is an infinite sequence eventually learned by $T$, then $T$ must be overwhelmed by any sufficiently long prefix $s$ of $\alpha$. Indeed, by isolation, there exists $\delta>0$ such that appending $m$ arbitrary tokens to $s$ (but keeping the same final token) leaves the output of $T$ unchanged whenever $m/(|s|+m) < \delta$, where $|s|$ denotes the length of $s$. In particular, $T$ will be overwhelmed by any such $s$.


\subsection{Related work}

As we have already mentioned, our work shows that every Transformer is overwhelmed by infinitely many strings, a concept introduced in \cite{stambler2025provably}. Their results are however of a different nature, as they focus on developing algorithms to rigorously detect whether a given Transformer $T$ is overwhelmed by a given string $s$. Our work is of theoretical nature, proving continuity and isolation and stipulating the associated limitations in decoder-only Transformers.

Continuity has been used before to demonstrate difficulties that Transformers have  in learning functions where small changes in the input leads to a substantive change in the output (e.g., PARITY)~\cite{hahn2020theoretical}. Moreover, it was shown that for such functions, as the sequence length increases, the loss landscape becomes more steep, leading to further complications in the learning process~\cite{hahn2024sensitive}. Importantly, previous continuity results have only been established for \emph{encoder-only} Transformers, where each token attends to the whole input, not only to previous tokens as in the decoder-only case (i.e., with causal masking). Under the encoder-only assumption, due to dispersion of softmax coefficients~\cite{velivckovic2024softmax},  flipping the value of one token leads to a $O(1/n)$-change in other tokens~\cite{hahn2020theoretical}. 

 The same is not verbatim true for the decoder-only architecture, and this constitutes the main difficulty of extending continuity to it. The problem is that causal masking breaks the symmetry of tokens in softmax, and earlier tokens have more influence. Flipping, say, the first token, leads to significant changes not only in itself but in the few first tokens also, given that softmax coefficients are not too dispersed for them yet. 
 A careful argument is required to show that this effect can be controlled, and this is the main technical contribution of our paper.


 Representational collapse in transformers, as far as we are aware, has been previously observed by  Barbero et al.~\cite{barbero2024transformers} only in the special case of two input sequences that are identical except that we repeat the last token in one of them (which is therefore one token larger). Moreover, their result requires two important assumptions: (i) the distance between the coordinates defining the positional encoding tends to 0 as the input sequence length increases, (ii) the absence of positional information at the level of Value matrices. Whereas the second assumption is in line with the widely used and state-of-the-art rotatory positional encoding method \cite{su2024roformer}, it does not allow to extrapolate the results to other positional encoding methods. The first assumption is simply absent from any standard application of LLMs.
 
In comparison, our results apply in much more generality. We just need the value, the attention, and the activation functions to be continuous. Therefore, they apply far beyond the standard dot-product softmax attention with activations computed by MLPs, -- even the Lipschitz property, crucial for the Hahn's continuity argument~\cite{hahn2020theoretical}, is not required for our proof. Besides, as mentioned before, we require compactness of the positional encoding, but in contrast to~\cite{barbero2024transformers}, we can freely use it at the level of values, not only in attention.




\subsection{Experiments}
In the remainder of the paper, we first define the continuity theorem (Section \ref{sec_main_proof}), followed by the isolation one (Section \ref{learnable_seq}). All the proofs of our theorems and lemmas can be found in Appendix \ref{miss_proofs}. For each theorem we present a series of experimental results to illustrate the extent to which the limitations predicted by our theorems can be observed in practice. We do this for several of the most recent versions of modern LLMs. We investigate continuity in two applications: the all-0 sequence and a more practical Python code syntax error detection task. For isolation, we evaluate how well decoder-only models can generate periodic sequences. Our results not only provide strong evidence that our theoretical findings are relevant in practice, but also offer a comprehensive picture of how the specific differences among these modern architectures affect the severity of the observed limitations. Code for our experiments can be found at \href{https://anonymous.4open.science/r/doubts-and-dilemmas-neurips25-6CF8/}{\includegraphics[height=1em]{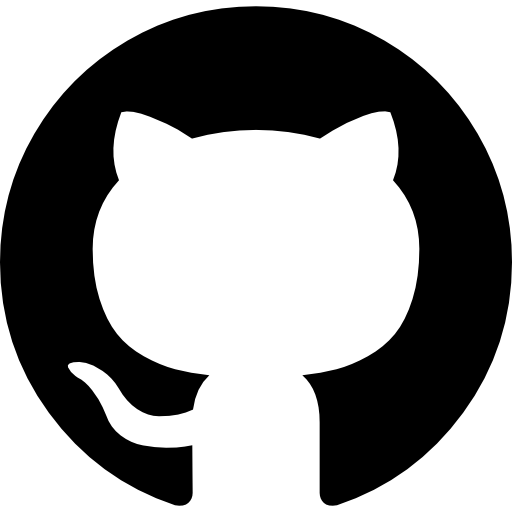} \texttt{doubts-and-dilemmas-neurips25-6CF8}}.

\section{Preliminaries}
\label{sec_prelim}
By $\|\cdot\|$ in this paper we mean the $l_\infty$-norm, but all our results hold for any other norm due to the equivalence of any two norms in $\mathbb{R}^d$ up to a constant factor.
\paragraph{Attention layers} The main part of the Transformer architecture (see Figure \ref{fig:model_viz}) is the \emph{attention layer}.

\begin{definition}
\label{def_layer}
    A $d$-dimensional decoder-only attention layer is a  function $L\colon(\mathbb{R}^d)^* \to(\mathbb{R}^d)^*$, given by a ``positional encoding'' $p\colon\mathbb{N}^2 \to\mathbb{R}^d$, a continuous ``value function'' $val\colon\mathbb{R}^d\to\mathbb{R}^d$, a continuous ``weight function'' $w\colon (\mathbb{R}^d)^3 \to (0, +\infty)$, and a continuous ``activation function'' $F\colon\mathbb{R}^d\times\mathbb{R}^d \to\mathbb{R}^d$.
    
    Given an input sequence of vectors $\bar x = (x_1, \ldots, x_n)\in(\mathbb{R}^d)^n$, the layer $L$ outputs  a  sequence of vectors $\bar y = (y_1, \ldots, y_n) = L(\bar x)$, computed as follows. First, one computes the ``attention weights'' and ``values'':
    \begin{align*}
        w_{ij} &= w(x_i, x_j, p(i,j)), \qquad i,j = 1,\ldots, n,\,\,i\le j\\
        v_j &= val(x_j), \qquad j = 1, \ldots, n.
    \end{align*}
 then
    a sequence $\bar a = (a_1, \ldots, a_n)$ of ``attention vectors''  as follows:
    \[a_j = \frac{w_{1j} v_1 + w_{2j} v_2 + \ldots + w_{jj} v_j}{w_{1j} + w_{2j}+\ldots + w_{jj}},\qquad j = 1, \ldots, n,\]
    and finally, one sets:
    \[y_j = F(a_j, x_j), \qquad j = 1,\ldots, n.\]
\end{definition}
Observe that $y_j$, the $j$-th output of $L$ on input $(x_1, \ldots, x_n)$, depends only on $x_1, \ldots, x_j$. This implies that decoder-only  attention layers are ``prefix-monotone'' functions: if a $\bar x_1$ is a prefix of $\bar x_2$, then $L(\bar x_1)$ is a prefix of $L(\bar x_2)$.

A positional encoding $p\colon\mathbb{N}^2\to\mathbb{R}^d$ is called \emph{compact} if there is a compact $K\subseteq \mathbb{R}^d$ such that $p(i, j)\in K$ for all $i,j\in\mathbb{N}$.

\paragraph{Transformers} By Transformers we mean functions that maps finite words over some alphabet $\Sigma$ to probability distributions over letters of $\Sigma$. To be more in line with terminology, accepted in the study of transformers, we refer to ``words'' as ``sequences'' and to their ``letters'' as ``tokens''.

The set of probability distributions over a finite set $\Sigma$ is denoted by $\Delta(\Sigma)$. 

\begin{definition}
\label{def_transformer}

A $d$-dimensional $k$-layer decoder-only Transformer over a finite alphabet $\Sigma$ is a function $T\colon \Sigma^*\to\Delta(\Sigma)$, given by an input embedding $e\colon \Sigma \times\mathbb{N}\to\mathbb{R}^d$, $k$ $d$-dimensional attention layers $L_1, \ldots, L_k$, and a continuous function $P\colon \mathbb{R}^d\to\Delta(\Sigma)$.

On an input sequence of tokens $\alpha = \alpha_1\ldots \alpha_n\in\Sigma^n$, the output probability distribution $T(\alpha)$ is computed as follows. First, we set
\[x_j = e(\alpha_j,j), \qquad j = 1, \ldots, n.\]
Then we compute the composition of attention layers $L_1, \ldots, L_k$ on the input sequence of vectors:
\[(y_1, \ldots, y_n) = L_k\circ \ldots \circ L_1(x_1, \ldots, x_n).\]
Finally, we set $T(w) = P(y_n)$.
\end{definition}
An input embedding is \emph{compact} if for some compact $K\subseteq \mathbb{R}^d$ we have $e(\sigma, i)\in K$ for all $\sigma\in \Sigma, i\in\mathbb{N}$. Overall, we call a Transformer $T$ \emph{compact} if it uses compact input embedding and compact positional encoding in all layers.

For notational simplicity, we assume that each layer has just 1 attention head. However, our model subsumes the case when an attention layer can have $O(1)$ attention heads as we can just compute each head in a separate layer.

\section{Continuity} 
\label{sec_main_proof}

Let $d_H(\alpha, \beta)$ denote the relativized Hamming distance between two sequences of tokens $\alpha,\beta\in\Sigma^n$:
\[d_H(\alpha, \beta) = \frac{\left|\{i\in\{1, \ldots, n\} : \alpha_i\neq \beta_i\right|}{n}\]
\begin{theorem}
    \label{thm_main} Let $T$ be a compact decoder-only Transformer. Then for any $\varepsilon > 0$ there exists $\delta > 0$ such that for any $n\in\mathbb{N}$, for any sequence of tokens $\alpha, \beta\in\Sigma^n$  with the same last token, if $d_H(\alpha, \beta)\le \delta$, then $\|T(\alpha) - T(\beta)\| \le \varepsilon$.
\end{theorem}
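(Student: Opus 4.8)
The plan is to track, layer by layer, how the per-position outputs of $T$ on $\alpha$ and on $\beta$ drift apart, and to show that at the final position $n$ the drift stays below the threshold needed to keep the read-out $P(\cdot)$ within $\varepsilon$. The first step is to cash in compactness: since the input embedding and every positional encoding take values in compact sets, a trivial induction on layers shows that the vectors produced by $L_1\circ\cdots\circ L_\ell$ all lie in a compact set $K_\ell$ (continuous images and convex hulls of compact sets in $\mathbb{R}^d$ are compact), so $val$, $w$, $F$ on the relevant domains, and $P$, are uniformly continuous. Moreover $w$, being continuous and strictly positive on a compact set, is bounded: $0<w_{\min}^{(\ell)}\le w_{ij}^{(\ell)}\le w_{\max}^{(\ell)}$. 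This lower bound is exactly what prevents a single early token from monopolising an attention weight: among the first $j$ positions, any $m$ of them carry at most a fraction $m\,w_{\max}^{(\ell)}/(j\,w_{\min}^{(\ell)})$ of the attention mass of position $j$.

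The combinatorial core is the elementary counting fact that for any $A\subseteq\{1,\dots,n\}$ and any $\rho>0$, at most $|A|/\rho$ indices $j$ satisfy $|A\cap\{1,\dots,j\}|/j>\rho$ (such a $j$ obeys $\rho j<|A\cap\{1,\dots,j\}|\le|A|$). I use it to control how a ``set of badly-perturbed positions'' can spread from one layer to the next. Fix a decreasing sequence $\varepsilon_k>\varepsilon_{k-1}>\cdots>\varepsilon_0>0$ to be chosen, and call position $i$ \emph{$\ell$-good} if $\|y_i^{(\ell)}-y_i'^{(\ell)}\|\le\varepsilon_\ell$ and \emph{$\ell$-bad} otherwise. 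The single-layer estimate I aim for is: if $j$ is $(\ell-1)$-good and the $(\ell-1)$-bad positions among $\{1,\dots,j\}$ form at most a $\rho_\ell$-fraction, then $j$ is $\ell$-good. To prove it I compare $a_j^{(\ell)}$ and $a_j'^{(\ell)}$ by interpolating through the attention vector that averages over the $(\ell-1)$-good positions $\le j$ only. Dropping the $(\ell-1)$-bad positions moves each of the two attention vectors by at most (bad weight fraction)$\times\operatorname{diam}K_{\ell-1}\le (w_{\max}^{(\ell)}/w_{\min}^{(\ell)})\,\rho_\ell\operatorname{diam}K_{\ell-1}$, while on the good positions both the values $val(y_i^{(\ell-1)})$ and the weights $w(y_i^{(\ell-1)},y_j^{(\ell-1)},p(i,j))$ are perturbed by only a uniform-continuity amount $\omega(\varepsilon_{\ell-1})$; a normalised weighted average of bounded vectors then changes by $O(\omega(\varepsilon_{\ell-1}))$ with a constant depending only on $w_{\min}^{(\ell)}$, $w_{\max}^{(\ell)}$ and $K_{\ell-1}$ (in the usual cross-multiplication bound the denominators are $\ge$ (number of good terms)$^2\cdot (w_{\min}^{(\ell)})^2$, so the cardinality cancels). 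Feeding this into uniform continuity of $F$, together with $\|y_j^{(\ell-1)}-y_j'^{(\ell-1)}\|\le\varepsilon_{\ell-1}$ from $j$ being $(\ell-1)$-good, bounds $\|y_j^{(\ell)}-y_j'^{(\ell)}\|$; choosing $\varepsilon_{\ell-1}$ and then $\rho_\ell$ small enough forces it below $\varepsilon_\ell$.

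The proof then closes with a backward-then-forward sweep. Backwards: pick $\varepsilon_k$ so that $\|P(u)-P(u')\|\le\varepsilon$ whenever $\|u-u'\|\le\varepsilon_k$, and successively extract $\varepsilon_{\ell-1}$ and $\rho_\ell$ from the single-layer estimate down to $\ell=1$ (at layer $1$ the $0$-good positions include all $i\notin S$, where $S:=\{i:\alpha_i\neq\beta_i\}$ and the drift is literally $0$; set $P_0:=S$, so $|P_0|\le d_H(\alpha,\beta)\,n\le\delta n$). Forwards: let $P_\ell$ be the set of $\ell$-bad positions. The contrapositive of the single-layer estimate gives $P_\ell\subseteq P_{\ell-1}\cup\{j:|P_{\ell-1}\cap\{1,\dots,j\}|/j>\rho_\ell\}$, so the counting fact yields $|P_\ell|\le(1+1/\rho_\ell)\,|P_{\ell-1}|$ and hence $|P_k|\le C\,|S|\le C\delta n$ with $C=\prod_\ell(1+1/\rho_\ell)$ depending only on $T$ and $\varepsilon$. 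Finally, the hypothesis $\alpha_n=\beta_n$ puts $n\notin P_0$, and an induction using $|P_{\ell-1}\cap\{1,\dots,n\}|/n=|P_{\ell-1}|/n\le C\delta$ shows $n\notin P_k$ as soon as $\delta$ is small enough that $C\delta\le\min_\ell\rho_\ell$. Then $\|y_n^{(k)}-y_n'^{(k)}\|\le\varepsilon_k$ and $\|T(\alpha)-T(\beta)\|=\|P(y_n^{(k)})-P(y_n'^{(k)})\|\le\varepsilon$.

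The main obstacle — and the precise point where the decoder-only (causal) setting genuinely differs from the encoder-only one — is the single-layer estimate. Flipping a few early tokens really does change the outputs at many early positions by $\Theta(1)$, so a uniform per-position bound is hopeless; the resolution is to isolate a bad set that is small in \emph{cardinality}, whose cardinality grows by only a bounded factor per layer, and to exploit that cardinality (not individual magnitude) is what controls attention averages once weights are bounded below. Everything else is bookkeeping with moduli of continuity and the counting inequality.
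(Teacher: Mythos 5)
Your proposal is correct and follows essentially the same route as the paper's proof: compactness gives uniformly continuous, $[c,C]$-bounded weights and values, a single-layer estimate (the paper's Lemma~\ref{lem_weak}, proved by the same cross-multiplication bound in which the prefix length cancels) propagates closeness at any position whose prefix contains few bad positions, and a backward choice of parameters over the $k$ layers finishes the argument. The only difference is bookkeeping: you track the \emph{cardinality} of the bad set via the counting fact (growth factor $1+1/\rho_\ell$ per layer), whereas the paper tracks the bad \emph{fraction} through its $sim$ quantity and discards positions $j<\varepsilon n/2$ when deriving Lemma~\ref{lem_main} from Lemma~\ref{lem_weak} -- an equivalent estimate.
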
 


\begin{corollary}[Next-token propagation principle, informal]\label{cor:next-token}
    Given $T$ a CPE decoder-only Transformer, if two prompts are very similar, end in the same token, and the next token prediction for one of them is computed with certainty, then one can expect that the next-token prediction for the other sequence is the same as for the first one.
\end{corollary}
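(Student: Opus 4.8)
The plan is to obtain the corollary as a direct consequence of the continuity theorem (Theorem~\ref{thm_main}), once the informal hypotheses are made precise. First I would fix conventions: say that the next-token prediction for a prompt $\alpha\in\Sigma^n$ is \emph{computed with certainty} if there is a margin $\gamma>0$ such that $T(\alpha)$ assigns to some token $\sigma^\star$ a probability that exceeds the probability it assigns to every other token by at least $\gamma$ (this is exactly the kind of uniform gap used in the learnability definition of the introduction); and say that two prompts $\alpha,\beta\in\Sigma^n$ are \emph{very similar} if they share their last token and $d_H(\alpha,\beta)$ is small. The quantity that the informal word ``expect'' hides is precisely how small: the threshold will depend on both $T$ and $\gamma$.

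Next I would invoke Theorem~\ref{thm_main} with $\varepsilon:=\gamma/3$ (any value strictly below $\gamma/2$ works). This produces a threshold $\delta>0$, depending only on $T$ and $\gamma$, such that whenever $\alpha,\beta\in\Sigma^n$ have the same last token and $d_H(\alpha,\beta)\le\delta$, every coordinate of the output distribution moves by at most $\varepsilon$ in passing from $\alpha$ to $\beta$, i.e. $\|T(\alpha)-T(\beta)\|\le\varepsilon$ in the $l_\infty$-norm used throughout the paper.

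Then I would close with a short argmax-stability argument. Writing $T(\cdot)_\sigma$ for the mass on token $\sigma$, for the top token $\sigma^\star$ of $\alpha$ we have $T(\beta)_{\sigma^\star}\ge T(\alpha)_{\sigma^\star}-\varepsilon$, while for any $\sigma\ne\sigma^\star$ we have $T(\beta)_\sigma\le T(\alpha)_\sigma+\varepsilon\le T(\alpha)_{\sigma^\star}-\gamma+\varepsilon$. Since $\varepsilon<\gamma/2$, the former strictly exceeds the latter, so $\sigma^\star$ is still the unique argmax of $T(\beta)$; hence the next-token prediction for $\beta$ coincides with that for $\alpha$, and in fact it is again computed with certainty, with margin at least $\gamma-2\varepsilon>0$.

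I do not expect a genuine mathematical obstacle here: the content is entirely carried by Theorem~\ref{thm_main}, and the remaining step is the elementary observation that an $\varepsilon$-perturbation in $l_\infty$ cannot flip an argmax whose gap exceeds $2\varepsilon$. The only points requiring care — and the reason the statement is phrased informally — are (i) committing to a notion of ``certainty'' as a gap that is uniform in the prompt length, so that the chosen $\varepsilon$ makes sense simultaneously for all $n$, and (ii) retaining the same-last-token hypothesis, which is indispensable for applying continuity and which must therefore be part of what ``very similar'' means.
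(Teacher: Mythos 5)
Your proposal is correct and follows essentially the paper's intended route: the paper states the corollary informally as a direct consequence of Theorem~\ref{thm_main}, and the formalization you give (certainty as a length-uniform margin $\gamma$, similarity as same last token plus small $d_H$, then continuity with $\varepsilon<\gamma/2$ and argmax stability) is exactly the argument the paper carries out implicitly when it applies Theorem~\ref{thm_main} with $\varepsilon/10$- and $\varepsilon/3$-closeness in the proofs of Proposition~\ref{prop_findif} and Theorem~\ref{thm_isolation}.
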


\subsection{Empirical support for continuity: Zero fundamental sequence} \label{sec:zero}
We investigate the behavior of decoder-only language models when presented with two highly similar prompts (whose Hamming distance is small), denoted \(\alpha\) and \(\beta\). According to Corollary~\ref{cor:next-token}, if the Hamming distance is small enough, then we expect the model to produce the same next-token prediction for both sequences. In order to test this (see more details in the Appendix \ref{continuity_viz}), we define an input prompt \(\alpha\) as a sequence of $190$ consecutive zeros\footnote{We chose this number to ensure a small enough $\delta$ for all selected models.}. We also generate $100$  sequences $\beta_\gamma^1, \beta_\gamma^2,...,\beta_\gamma^{100}$ independently, where \(\beta^i_\gamma\) is generated perturbing \(\alpha\) at $\max(1, \lfloor \gamma\cdot189\rfloor)$ positions chosen uniformly at random (ignoring the last position) and $\gamma \in (0,1/2]$ controls the proportion of differing positions. Thus, the relative Hamming distance between $\alpha$ and $\beta^i_\gamma$ is (almost) $\gamma$. All the sequences (including $\alpha$) are appended to the common instruction prefix: ``\texttt{Complete the sequence with 0s and 1s:}'', and submitted to the model as input. We then generate the next token $N(\alpha)$ and  ${N(\beta^i_\gamma)}$ for every $i = 1, \ldots,100$. We measure the model sensitivity, counting how many $\beta$'s  produce the next token different from  the next token of $\alpha$ (in our case 0), i.e. \[
\textsf{NTS}_\gamma(\alpha) = |\{i \in \{1,2,..,100\}: N(\beta^i_\gamma)\neq N(\alpha)\}|,\] where a higher count indicates greater sensitivity to changes in the input.

\begin{figure}[htbp]
    \centering
    \includegraphics[width=\textwidth]{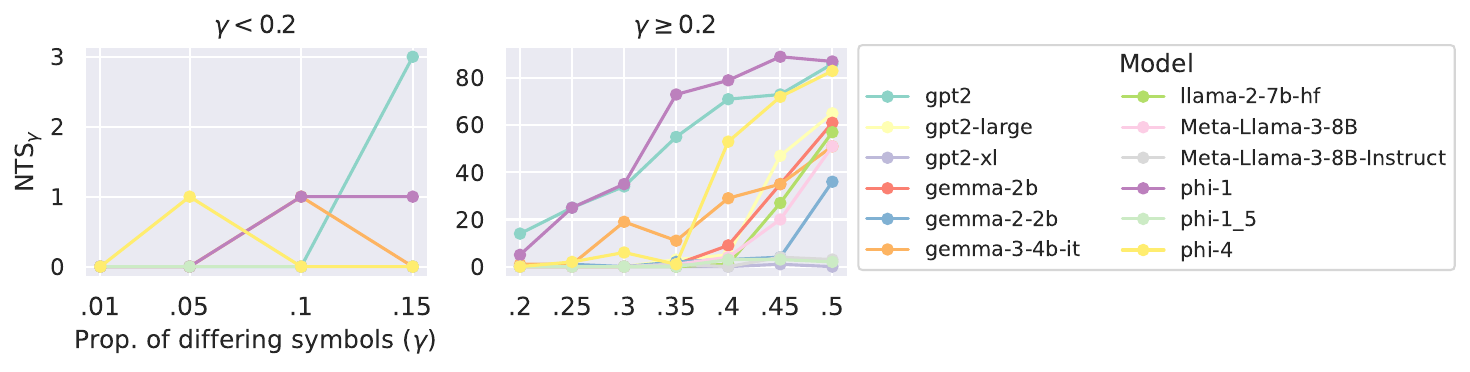}
    \caption{
        Sensitivity of decoder-only language models to input perturbations at $\gamma < 0.2$ and $\gamma \geq 0.2$. 
    }
    \label{fig:next_sensitivity}
\end{figure}

As shown in Figure~\ref{fig:next_sensitivity}, the results support the Corollary~\ref{cor:next-token}. If the proportion of differing symbols is small enough ($\gamma=0.01$, corresponding to $1$ symbols), the number of diverging samples is null for all the models, 
suggesting the existence of an attractor (representational collapse) around the all-zero sequence. In fact, even at $\gamma = 0.05$ ($9$ symbols) the value of $\textsf{NTS}_\gamma$ is $0$ for all models except two, \texttt{phi-4}
. Somewhat surprisingly, we observe that smaller models sometimes expose larger sensitivity (larger value of $\textsf{NTS}_\gamma$), for example, \texttt{gpt-2} and \texttt{gpt-2-xl}.

Furthermore, we analyzed whether sensitivity is impacted by the location of the perturbations in the sequence. Across models, we found a consistent trend: sensitivity is the highest when perturbations occur at the end of the sequence (Appendix~\ref{sec:divergent-position}). Middle positions result in lower sensitivity, and early tokens yield almost no sensitivity at all.

Additionally, we observe that modifying the standard attention (softmax) with the log-length scaled attention (\textit{ssmax}) yields a marked increase in sensitivity (Appendix~\ref{boost_sensitive}). This support the assumption that compact positional encoding in attention weight posses a key role on the effect of input perturbations.


\subsection{Empirical support for continuity: Code Syntax Verification}\label{sec:emp_support_cont_code}

We now turn to \textsc{SyntaxVerification} (see Appendix~\ref{syntax_verif} for design explanation), a more practical LLM application. As in the previous section, we consider decoder-only language models that receive two similar prompts, \(\alpha\) and \(\beta\), differing up to a small Hamming distance. According to Corollary~\ref{cor:next-token}, for sufficiently small Hamming distances, we expect the model to produce the same next-token prediction for both prompts, even though the task requires the model to provide different outputs. Of course, different models will typically have different thresholds below which the Hamming distance classifies as being sufficiently small in this sense. However, even if some small Hamming distance is not quite below the theoretical threshold of a model, we still expect the model to predict the same next token on a large set of pairs of input sequences. \textsc{SyntaxVerification} is designed precisely to visualize the extent to which this phenomenon can be observed at scales of practical relevance.

Our \(\alpha\) and \(\beta\) prompts share the same structure, as illustrated in Figure \ref{fig:viz_syntax_error}. Each sequence begins with the same main instruction (MI), followed by three \textit{Exercises}, namely, two example exercises (shots) and one test exercise. Each \textit{Exercise} $E_i$ in the prompt consist of four components: an \textit{Instruction}, a \textit{Python Code} snippet, a \textit{Question} and an \textit{Answer}. The \(\alpha\) and \(\beta\) prompts share the same first two \textit{Exercises}, but they have a small discrepancy in the test case \textit{Exercise}. The two prompts differ in a single token within the test \textit{Exercise} (blue boxes in Figure \ref{fig:viz_syntax_error}, Appendix~\ref{syntax_verif}). In the particular case of Figure \ref{fig:viz_syntax_error}, the $\alpha$ prompt includes the ``\texttt{=}'' token in the correct version, which is replaced by the ``\texttt{for}'' token in the $\beta$ prompt (incorrect version)\footnote{In practice, we define $\alpha$ as the prompt generating the greater discrepancy between the top-1 and top-2 token probability.}.  For all tests, we query the LLM in the same way: ``\texttt{Does the following Python code compile without syntax errors? If no error is detected, return 1; otherwise, return 0.}''.



We built a dataset of 100 python function exercises presented in two versions, with and without syntax error. Each exercise is embedded as a final test exercise in both (correct and incorrect) formats (Figure \ref{fig:viz_syntax_error}). We consider a model to be \textit{sensitive} if it produces different answers when presented with the $\alpha$ and $\beta$ prompts. 
More specifically, we give to the model the prompt $\alpha$ and generate a token $\sigma$.  Then we compare the probability of $\sigma$ under the prompt $\alpha$ (denoted by $P(\sigma|\alpha)$) with the probability of the same token $\sigma$ under the prompt $\beta$ (denoted by $P(\sigma|\beta)$).
To visualize sensitivity, we plot a point with coordinates $(P(\sigma|\alpha), P(\sigma|\beta))$. Intuitively, any substantial deviation from the diagonal line should correspond to trials where the model is sensitive, and vice versa.  

Our results are depicted in Figure \ref{fig:results_syntax_error}. While a vast majority of the examples do not show sensitivity, the proportion of trials where sensitivity is observed varies significantly across different models. First, we observe that model size (at least within the same family, see gemma-3 models) has an influence on sensitivity, with bigger models displaying more sensitivity. Second, phi-4, a model that is primarily trained on code, fails spectacularly, providing the same output for all cases (a rather striking result). Lastly, \texttt{Meta-llama-3-8B-Instruct} displays sensitivity in 21$\%$ of the examples. These results suggest that while the theoretically required Hamming distance might be quite small and heavily depend on the particular model, the consequences of its existence may well become relevant at scales of practical interest.



\begin{figure}[htp!]
    \centering
    \includegraphics[width=\linewidth]{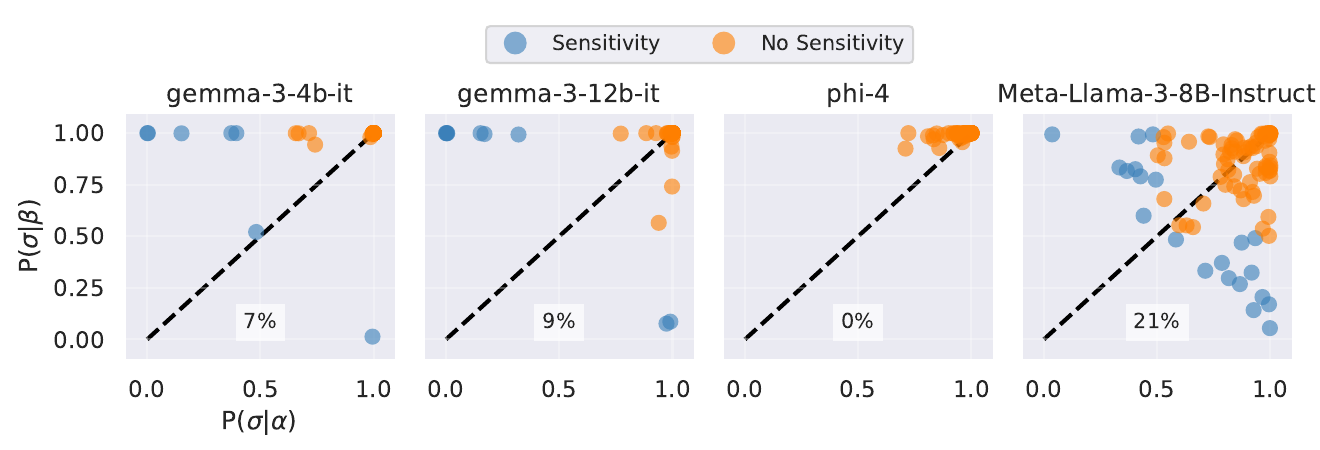}
    \caption{
    Sensitivity in the \syntaxverification~task. This figure illustrates the sensitivity of five models to subtle syntactic changes in Python functions for pairs of input prompts. Each dot represents the model's probability assigned to a target token $\sigma$ under two prompts, $\alpha$ and $\beta$. \textcolor{newblue}{\textbf{Blue}} dots indicate \emph{sensitive} cases where the model’s output changed in response to the syntactic errors, as expected. \textcolor{neworange}{\textbf{Orange}} dots mark \emph{non-sensitive} cases where the model failed to adapt its prediction, despite the change in input. Percentages in each subplot indicate the proportion of samples where the model exhibited sensitivity. 
}
    \label{fig:results_syntax_error}
\end{figure}

\section{Isolation}
\label{learnable_seq}
We start with a formalization of the notion of eventual learnability.
\begin{definition}
\label{def_el}
    A decoder-only transformer $T$ eventually learns an infinite sequence $\alpha = \alpha_1 \alpha_2 \alpha_3\ldots\in \Sigma^\omega$ if there exists $\varepsilon > 0$ and $n_0\in\mathbb{N}$ such that for all $n\ge n_0$, we have
    \[T(\alpha_1\ldots \alpha_n)(\alpha_{n+1}) \ge T(\alpha_1\ldots \alpha_n)(\sigma) + \varepsilon\]
    for any $\sigma\in \Sigma\setminus\{\alpha_{n+1}\}$.
\end{definition}

An easy consequence of Theorem \ref{thm_main} is that eventual learnability is not affected by making finitely many changes in a sequence.
\begin{proposition}
\label{prop_findif}
    Let $T$ be a compact decoder-only transformer, and $\alpha, \hat{\alpha}\in\Sigma^\omega$ be two infinite sequences, differing only in finitely many positions. Then $T$ eventually learns $\alpha$ if and only if it eventually learns $\hat{\alpha}$.
\end{proposition}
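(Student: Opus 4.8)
The plan is to reduce the statement to an application of Theorem~\ref{thm_main} (continuity). Suppose $T$ eventually learns $\alpha$; we want to show it eventually learns $\hat\alpha$ (the converse is symmetric, since $\alpha$ and $\hat\alpha$ differ in the same finite set of positions). Let $\varepsilon_0 > 0$ and $n_0$ be the constants from Definition~\ref{def_el} witnessing that $T$ eventually learns $\alpha$, so that for every $n \ge n_0$ the token $\alpha_{n+1}$ beats every other token by a margin of at least $\varepsilon_0$ in $T(\alpha_1\ldots\alpha_n)$. Apply Theorem~\ref{thm_main} with $\varepsilon := \varepsilon_0 / 3$ (any value strictly below $\varepsilon_0/2$ works) to obtain a threshold $\delta > 0$ depending only on $T$.

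The key observation is that since $\alpha$ and $\hat\alpha$ differ in only finitely many positions, there is an index $m$ such that $\alpha_i = \hat\alpha_i$ for all $i > m$; in particular, the total number of disagreements among the first $n$ symbols is at most $m$, a constant. Hence for the length-$n$ prefixes we have $d_H(\alpha_1\ldots\alpha_n, \hat\alpha_1\ldots\hat\alpha_n) \le m/n$, which is $\le \delta$ as soon as $n \ge m/\delta$. Moreover, for $n > m$ the last tokens agree, $\alpha_n = \hat\alpha_n$, and also $\alpha_{n+1} = \hat\alpha_{n+1}$. So choose $n_1 := \max\{n_0,\, m,\, \lceil m/\delta\rceil\}$. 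For every $n \ge n_1$, Theorem~\ref{thm_main} gives $\|T(\alpha_1\ldots\alpha_n) - T(\hat\alpha_1\ldots\hat\alpha_n)\| \le \varepsilon_0/3$, hence for every token $\sigma$,
\[
\left| T(\hat\alpha_1\ldots\hat\alpha_n)(\sigma) - T(\alpha_1\ldots\alpha_n)(\sigma) \right| \le \varepsilon_0/3.
\]
Combining this with the learnability margin for $\alpha$ and the identity $\hat\alpha_{n+1} = \alpha_{n+1}$, we get for any $\sigma \neq \hat\alpha_{n+1}$:
\[
T(\hat\alpha_1\ldots\hat\alpha_n)(\hat\alpha_{n+1}) \ge T(\alpha_1\ldots\alpha_n)(\alpha_{n+1}) - \varepsilon_0/3 \ge T(\alpha_1\ldots\alpha_n)(\sigma) + \varepsilon_0 - \varepsilon_0/3 \ge T(\hat\alpha_1\ldots\hat\alpha_n)(\sigma) + \varepsilon_0/3.
\]
Thus $T$ eventually learns $\hat\alpha$ with margin $\varepsilon_0/3$ and threshold $n_1$, completing the argument.

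There is no real obstacle here: the only things to be careful about are (i) choosing $n_1$ large enough that simultaneously the Hamming distance is below $\delta$, the last and next tokens of the two sequences agree, and the original learnability bound is in force; and (ii) tracking the $\varepsilon_0/3$ slack on both sides of the margin inequality so that a strictly positive margin survives. Note that compactness of $T$ is used precisely to invoke Theorem~\ref{thm_main}; no further hypotheses are needed.
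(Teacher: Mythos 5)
Your proposal is correct and follows essentially the same route as the paper's proof: invoke Theorem~\ref{thm_main} to get a $\delta$ for a fraction of the learnability margin, note that finitely many differences force the prefixes to eventually have relative Hamming distance below $\delta$ with matching last and next tokens, and then transfer the margin inequality with a reduced but still positive slack (the paper uses $\varepsilon/10$ where you use $\varepsilon_0/3$, an immaterial difference).
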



We now formulate our isolation theorem. For that, we require an extension of relative Hamming distance to infinite sequences:
\[d_H(\alpha,\beta) = \liminf\limits_{n\to\infty} d_H(\alpha_1\ldots \alpha_n, \beta_1\ldots \beta_n), \qquad \alpha,\beta\in\Sigma^\omega.\]

\begin{theorem}
\label{thm_isolation}
    Let $T$ be a decoder-only compact transformer. Then for any infinite sequence $\alpha$, eventually learnable by $T$, there exists $\delta > 0$ such that no infinite sequence $\beta$ that differs from $\alpha$ in infinitely many positions and satisfies $d_H(\alpha, \beta) \le \delta$ is eventually learnable by $T$.
\end{theorem}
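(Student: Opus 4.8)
The plan is to derive Theorem~\ref{thm_isolation} as a fairly direct consequence of the continuity Theorem~\ref{thm_main}, together with the observation (made precise in Proposition~\ref{prop_findif}) that eventual learnability is insensitive to finitely many changes. So let $T$ be a compact decoder-only Transformer that eventually learns $\alpha$, and let $\varepsilon_0 > 0$ and $n_0$ be the constants witnessing eventual learnability, i.e.\ for all $n \ge n_0$ the next-token probability $T(\alpha_1\ldots\alpha_n)(\alpha_{n+1})$ exceeds the probability of every other token by at least $\varepsilon_0$. First I would feed $\varepsilon = \varepsilon_0/3$ (any constant comfortably below $\varepsilon_0/2$ works) into Theorem~\ref{thm_main} to obtain a threshold $\delta > 0$ depending only on $T$ and $\varepsilon_0$; this $\delta$ is the radius claimed in the isolation statement.

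Next I would fix any infinite sequence $\beta$ with $d_H(\alpha,\beta) \le \delta$ that differs from $\alpha$ in infinitely many positions, and argue it cannot be eventually learned by $T$. Suppose toward a contradiction that it were, say with its own witnessing constants $\varepsilon_1 > 0$ and $n_1$. The key step is to find, for arbitrarily large $n$, a prefix length at which I can compare the behaviour of $T$ on the length-$n$ prefixes of $\alpha$ and $\beta$. By definition of $d_H$ as a $\liminf$, there are infinitely many $n$ for which $d_H(\alpha_1\ldots\alpha_n,\beta_1\ldots\beta_n) \le \delta$. There is a technical wrinkle: Theorem~\ref{thm_main} requires the two finite sequences to agree in their last token, which need not hold for a given $n$. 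I would handle this by a small perturbation/alignment argument — among the infinitely many good $n$, pick one where $\alpha_{n+1} = \beta_{n+1}$ is not needed, but rather modify the length-$n$ prefix of $\beta$ in its last coordinate to match $\alpha_n$ (this changes $d_H$ by at most $1/n$, so for $n$ large the hypothesis $d_H \le \delta$ can be relaxed to $d_H \le \delta - 1/n$ by choosing the original $\liminf$ bound slightly below $\delta$, or equivalently by starting from $\varepsilon = \varepsilon_0/3$ which gives headroom). Having arranged agreement in the last token, Theorem~\ref{thm_main} gives $\|T(\alpha_1\ldots\alpha_n) - T(\beta_1'\ldots\beta_n')\| \le \varepsilon_0/3$, where $\beta'$ denotes the (finitely) modified version of $\beta$; and $\beta'$ still differs from $\alpha$ infinitely often and has $d_H(\alpha,\beta') = d_H(\alpha,\beta)$. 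Since $\beta'$ and $\beta$ differ in finitely many positions, by Proposition~\ref{prop_findif} $T$ eventually learns $\beta'$ iff it eventually learns $\beta$, so it suffices to rule out eventual learnability of $\beta'$.

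Now I combine the estimates. On one hand, for $n \ge n_0$, the token $\alpha_{n+1}$ beats every competitor under $T(\alpha_1\ldots\alpha_n)$ by at least $\varepsilon_0$; since $\|T(\alpha_1\ldots\alpha_n) - T(\beta'_1\ldots\beta'_n)\|_\infty \le \varepsilon_0/3$, the same token $\alpha_{n+1}$ is still the strict argmax under $T(\beta'_1\ldots\beta'_n)$ — indeed it beats every other token by at least $\varepsilon_0 - 2\varepsilon_0/3 = \varepsilon_0/3 > 0$. On the other hand, since $\beta'$ differs from $\alpha$ at infinitely many positions, there are infinitely many indices $n$ among our good ones with $\beta'_{n+1} \ne \alpha_{n+1}$; at each such $n$ the argmax of $T(\beta'_1\ldots\beta'_n)$ is $\alpha_{n+1} \ne \beta'_{n+1}$, so $T$ predicts the wrong next token for $\beta'$ infinitely often. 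This contradicts eventual learnability of $\beta'$, hence of $\beta$, completing the proof.

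The main obstacle, and the only place the argument is not completely mechanical, is the bookkeeping around the last-token hypothesis of Theorem~\ref{thm_main} together with ensuring that the set of prefix lengths $n$ at which $d_H(\alpha_1\ldots\alpha_n,\beta_1\ldots\beta_n) \le \delta$ \emph{and} $\beta_{n+1}\neq\alpha_{n+1}$ is infinite. The first is resolved by the finite-modification trick plus the slack built into choosing $\varepsilon = \varepsilon_0/3$; the second follows because $\{n : \beta_n \ne \alpha_n\}$ is infinite and $\{n : d_H(\alpha_1\ldots\alpha_n,\beta_1\ldots\beta_n)\le\delta\}$ is infinite — but one must be slightly careful that their \emph{shifts} intersect infinitely often, which holds because the $\liminf$ condition gives good $n$ cofinally and each such $n$ can be nudged upward to the next disagreement position with only an $O(1/n)$ cost to $d_H$. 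I would spell out this last combinatorial point carefully, as it is where a naive argument could slip.
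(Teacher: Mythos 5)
Your high-level strategy is the same as the paper's (feed $\varepsilon_0/3$ into Theorem~\ref{thm_main}, transfer the $\varepsilon_0$-margin from $\alpha$'s prefixes to $\beta$'s prefixes, and conclude that $T$ mispredicts $\beta$ at arbitrarily large positions), but the step you yourself flag as delicate --- arranging the last-token agreement --- is where your argument breaks, and the fix you propose does not work. Modifying the length-$n$ prefix of $\beta$ so that $\beta'_n=\alpha_n$ produces a sequence $\beta'$ whose prefixes agree with $\alpha$'s in the last token only at that one length $n$. Continuity therefore yields a single misprediction of $\beta'$ (at position $n$), which is not a contradiction: eventual learnability tolerates any finite set of failures, so you need failures at arbitrarily large lengths \emph{for one and the same sequence}. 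Using a different modification $\beta'^{(n)}$ for each good $n$ does not aggregate into a contradiction either: Proposition~\ref{prop_findif} transfers learnability only from some index onward, and that index depends on where the modification sits, so a misprediction of $\beta'^{(n)}$ exactly at the modified position $n$ is perfectly compatible with $\beta$ being eventually learnable --- indeed the two prompts $\beta_1\ldots\beta_n$ and $\beta_1\ldots\beta_{n-1}\alpha_n$ differ precisely in the last token, which is the one case Theorem~\ref{thm_main} explicitly does not control, so nothing can be transferred back to $\beta$ at that length. Modifying $\beta$ at infinitely many positions would restore agreement but destroys the hypothesis of Proposition~\ref{prop_findif}.

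Your closing remark (``nudge $n$ upward to the next disagreement position with only an $O(1/n)$ cost'') points toward the correct repair but is not justified as stated: if a long consecutive run of disagreements begins right after a good prefix, the first later length at which the last tokens agree again can be far away, and the relative Hamming distance there can grow to order $1/2$ rather than $O(1/n)$, overshooting the continuity threshold. The paper's proof handles exactly this: it shrinks the isolation radius to $\widehat{\delta}=\min\{\delta/3,1/3\}$, uses the liminf to get a prefix length $m$ with distance at most $1.1\widehat{\delta}<1/2$, deduces that the \emph{second half} of that prefix must contain a position $\ell$ with $\alpha_\ell=\beta_\ell$, and then takes the smallest $n\ge\ell$ with $\alpha_{n+1}\neq\beta_{n+1}$; minimality forces $\alpha_n=\beta_n$ automatically, appending equal tokens only decreases the relative distance, and passing from length $m$ to length $\ell\ge m/2$ costs at most a factor $2$, keeping the distance below $\delta$. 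Some argument of this kind (or another mechanism ruling out the long-run-of-disagreements obstruction, together with the liminf slack you already noted) is needed to close the gap; as written, your proof establishes isolation only against sequences $\beta$ whose disagreements with $\alpha$ are conveniently placed.
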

Equivalently, this theorem states that if a set of infinite sequences $S$ has a point $\alpha\in S$ such that arbitrarily close to $\alpha$ in $d_H$-distance there is a sequence from $S$ that differs from $\alpha$ at infinitely many positions, then no decoder-only compact transformer can eventually learn all sequences from $S$.

In particular, if two sequences $\alpha, \beta$ differ in infinitely many positions but satisfy $d_H(\alpha, \beta) = 0$, then no compact decoder-only transformer $T$ eventually learns both of them. For example, there is no decoder-only compact transformer that eventually learns more than one of the following sequences: (i) The all-zero sequence ($0, 0, ...$), (ii) The indicator sequence of the powers of $2$, (iii) The indicator sequence of the squares, and (iv) The indicator sequence of the primes. Indeed, we just have to note that any two of these sequences differ in infinitely many positions, but their relative Hamming distance is $0$.

One can show that for any \emph{finite} family of infinite sequences that are all a positive $d_H$-distance away one from another, there is a single decoder-only transformer $T$ that eventually learns them all. In particular, this applies to any finite subfamily of the family of periodic sequences. However, the isolation theorem implies that this is not doable for the whole family of periodic sequences. 
\begin{corollary}
\label{cor_per}
There is no decoder-only compact transformer that eventually learns all periodic sequences.
\end{corollary}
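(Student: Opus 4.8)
The plan is to derive Corollary~\ref{cor_per} directly from Theorem~\ref{thm_isolation} by exhibiting, for every candidate transformer $T$, a single periodic sequence that $T$ fails to learn. First I would argue by contradiction: suppose $T$ is a decoder-only compact transformer that eventually learns every periodic sequence. In particular $T$ eventually learns the all-zero sequence $\alpha = 000\ldots$, which is periodic (period $1$). Apply Theorem~\ref{thm_isolation} to this $\alpha$: there exists $\delta > 0$ such that no infinite sequence $\beta$ differing from $\alpha$ in infinitely many positions and satisfying $d_H(\alpha,\beta)\le\delta$ is eventually learnable by $T$.

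The next step is to produce a periodic $\beta$ inside this forbidden ball. Fix any integer $k > 1/\delta$ and take
\[
\beta = \underbrace{00\ldots 01}_{k}\,\underbrace{00\ldots 01}_{k}\,\underbrace{00\ldots 01}_{k}\ldots,
\]
the periodic sequence of period $k$ consisting of $k-1$ zeros followed by a single $1$. This $\beta$ differs from $\alpha$ at exactly the positions that are multiples of $k$, which is an infinite set; and along prefixes of length $n$ the number of differing positions is $\lfloor n/k\rfloor$, so $d_H(\alpha,\beta) = \lim_{n\to\infty}\lfloor n/k\rfloor / n = 1/k < \delta$. Hence $\beta$ lies in the ball around $\alpha$ to which Theorem~\ref{thm_isolation} applies, and differs from $\alpha$ in infinitely many positions, so $T$ does \emph{not} eventually learn $\beta$. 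But $\beta$ is periodic, contradicting the assumption that $T$ learns all periodic sequences. Therefore no such $T$ exists.

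There is essentially no technical obstacle here, since all the work has been absorbed into Theorem~\ref{thm_isolation}; the only point requiring a line of care is the computation $d_H(\alpha,\beta)=1/k$ (including the remark that the $\liminf$ in the definition of $d_H$ for infinite sequences is in fact a genuine limit for this $\beta$, so the bound $d_H(\alpha,\beta)\le\delta$ is unambiguous), together with checking $k$ can be chosen to be an integer strictly exceeding $1/\delta$, which is automatic. I would also note in passing — to contrast with the corollary — that for any \emph{finite} set of periodic sequences pairwise at positive $d_H$-distance, a single transformer can learn them all (by encoding the finitely many periods in the positional encoding), which is why the genuinely infinite family of periodic sequences is the sharp obstruction.
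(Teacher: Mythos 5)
Your proof is correct and follows exactly the same route as the paper: apply Theorem~\ref{thm_isolation} to the all-zero sequence, then place the period-$k$ sequence $(0^{k-1}1)^\omega$ with $k > 1/\delta$ inside the forbidden ball (via $d_H = 1/k$) to reach a contradiction. The small extras you add — noting the $\liminf$ is a genuine limit here, and the aside on finite families — are fine but not needed; the core argument matches the paper's.
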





\subsection{Empirical support for isolation: Periodic Pattern Generation}\label{sec:periodic}

To test Corollary \ref{cor_per}, we consider periodic sequences of the form \(\beta_p^r =  (0^{p-1}1)^{r}0\), i.e.
\[
    \beta_p^r =
    \underbrace{
        \overbrace{0\dots01}^{\text{1st block}}
        \overbrace{0\dots01}^{\text{2nd}}
        \dots
        \overbrace{0\dots01}^{\text{\(r\)th}}
    }_\text{$r$ blocks of length $p$}0
    ,
\]
where \(p\) is the period and \(r\) is the number of repetitions. We construct input sequences appending $\beta_{p}^r$ to the common instruction prefix: ``\texttt{Complete the following periodic sequence with 0s and 1s:}''. The model is then evaluated to continue the pattern over 505-autoregressive steps.

We assess performance using two metrics: (\textit{i}) \textbf{Success} (binary metric) captures whether the model perfectly reproduces the correct continuation. A correct sequence yields a checkmark (\cmark), while any deviation results in a cross (\xmark), 
and 
(\textit{ii}) \textbf{Certainty} (metric between $0$ and $1$) is measured as the difference between the top two probabilities for the next token after $(p-2)$-autoregressive steps\footnote{At this stage, we expect to predict the first one token following the generation of $p-2$ zeros.}. A larger difference indicates greater model confidence.

Figure~\ref{fig:periodic_llama} presents results from the \texttt{Llama-2-7b-hf} model across \(r =1, 4, 10\) repetitions and periods from $2$ to $40$ (see Appendix \ref{sec:effect-different-models} for an extended analysis to a broader set of models with varying sizes and architectures). Columns correspond to  different numbers of pattern repetitions, which can be thought of varying the number of examples (here the pattern to be repeated) seen by the model prior to the generation phase. Our results show that there exists a critical period beyond which the model \textit{cannot} successfully learn the periodic sequence correctly (first row; as predicted by Corollary \ref{cor_per}). This critical period seems to increase with the number $r$ of examples the model is shown. Lastly, certainty (second row) shows that the difference in probability between the two top tokens displays a small dip for the next-token.  This dip becomes more pronounced around the critical period, where the model initially predicts the first one token (blue dots) correctly, but then begins to generate the zero token (pink dots) instead.

\begin{figure}[htbp]
    \centering
    \includegraphics[width=.95\textwidth]{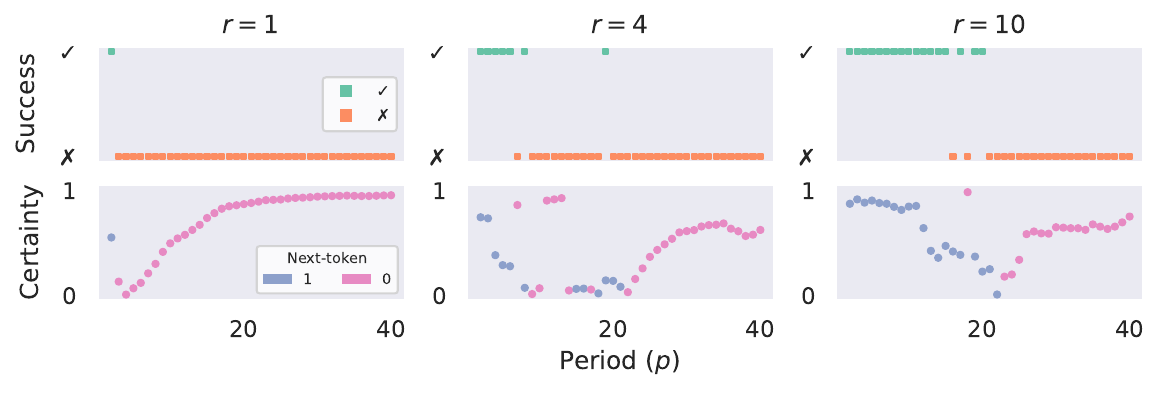}
    \vspace{-6px}
    \caption{
        Evaluation of periodic sequence generation using \texttt{Llama-2-7b-hf}.
    }
    \label{fig:periodic_llama}
\end{figure}




\section{Conclusion -- Doubts and Dilemmas}\label{conclusion}
Our results show that Transformers (or, maybe, their developers) face inevitable dilemmas: even among very simple sequences they have to choose some that they will not be able to learn. This relies on fundamental properties of continuity and isolation but also on our assumption about the absence of \emph{doubts} -- the next token has to be predicted with certainty, its probability has to be the largest one with some margin. One could avoid dilemmas by giving up the no-doubts assumption, but one cannot avoid both -- continuity and isolation imply that either dilemmas or doubts (or both) will be faced.


\paragraph{Limitations} The key requirement for our results is compactness of positional encoding (meaning that its vectors are bounded in norm by some absolute constant). This subsumes such standard positional encodings like sinusoidal~\cite{vaswani2017attention} and rotary~\cite{su2024roformer},  but not some others like absolute positional encoding or $\log n$-scaling ~\cite{chiang2022overcoming} (now referred as scalable softmax~\cite{nakanishi2025scalable}). Local layer norm is subsumed by our model as well because it is a continuous transformation, as long as some positive constant $\varepsilon > 0$ is added to the denominator (as standard in practice). It is worth to point out that some theoretical works have considered layer norm with $\varepsilon = 0$. In this regime, layer norm falls out of the scope of our results, as it is no longer continuous (and it is not even defined when the denominator is $0$). Using layer norm with $\varepsilon = 0$, Chiang and Cholak~\cite{chiang2022overcoming} compute PARITY with arbitrarily high certainty, thus escaping continuity limitations (see also~\cite{kozachinskiy2025completely} where these limitations are avoided due to the use of unbounded positional encoding).

Another potential way to escape continuity limitations is the use of chain-of-thought (CoT; see Appendix \ref{sec:effect-thinking-models} for preliminary analyses). As long as just $O(1)$ CoT iterations are allowed, this is subsumed by our results (our model allows for any constant number of layers, so we could just add more of them, simulating each CoT inference in a new layer). Now, when the number of CoT iterations grows with the input length, this logic breaks, and the models might potentially become much more intelligent. Indeed, on the theory side, it was shown that transformers with unbounded number of CoT inferences become Turing complete~\cite{Perez2021,DBLP:conf/iclr/MerrillS24,DBLP:conf/iclr/0001LZ024}. However, none of these results is obtained for softmax with compact positional encoding, leaving the theoretical power of CoT-equipped transformers in this regime open.





\appendix

\section{Missing Proofs}
\label{miss_proofs}
\subsection{Proof of Theorem \ref{thm_main}}\label{proof:thm_main}

\begin{proof}
Given two sequences of tokens $\alpha = \alpha_1\ldots \alpha_n, \beta = \beta_1\ldots \beta_n\in\Sigma^n$ with the same last token and $d_H(\alpha, \beta) \le \delta$, we input them into the Transformer as sequences of vectors
\begin{equation}
\label{eq_input}
    x_1 = e(\alpha_1, 1), \ldots, x_n= e(\alpha_n, n), \qquad \wx_1 = e(\beta_1, 1), \ldots, \wx_n = e(\beta_n, n).
\end{equation}

Since the input embedding is compact, all vectors $x_i, \wx_i$ belong to a fixed compact set $K$ (not depending on $n, u,v$). The last vectors $x_n, \overline{x}_n$ coincide (given that the last token of $\alpha$ and $\beta$ coincide). Besides that, sequences of input vectors coincide in at least $(1 - \delta) n$ positions ($\alpha$ and $\beta$ coincide in at least $(1 - \delta) n$ positions).

We have to show that the last vectors $y_n^{(t)}, \wy_n^{(t)}$ will stay sufficiently close throughout all layers of our transformer $T$, where
\[(y_1^{(t)}, \ldots, y_n^{(t)}), \qquad (\wy_1^{(t)}, \ldots, \wy_n^{(t)})\]
are output sequences after $t$ attention layers of $T$ on $x = (x_1, \ldots, x_n)$ and on $\wx = (\wx_1, \ldots, \wx_n)$, respectively. More precisely, for any $\varepsilon > 0$, we have to  show that the existence of $\delta > 0$ such that conditions $d_H(u,v)\le \delta, u_n = v_n$ imply $\|y_n^{(t)} - \wy_n^{(t)}\|\le \varepsilon$.

We show that through induction by the number of layers. To make this work, we have to strengthen the statement we are proving by induction -- it will not be enough to just show that after one attention layer, the last vectors stay sufficiently close. We will also have to maintain that sequences of vectors stay sufficiently close to each other ``globally''. We will establish this through a lemma, staying roughly the following: for any attention layer $L$, if two input sequences of vectors are sufficiently close globally, and their last vectors are also sufficiently close, then the output sequences stay sufficiently close globally, and the last output vector stay sufficiently close as well.

We define ``global similarity'' between two sequences of vectors as follows. Given $x = (x_1, \ldots, x_n)\in(\mathbb{R}^d)^n$ and $\wx = (\wx_1, \ldots, \wx_n)\in(\mathbb{R}^d)^n$, define $sim(x, \wx)$ as the minimal $\delta \ge 0$ such that $\|x_i - \wx_i\| \le \delta$ for at least $(1 - \delta) n$ positions $i\in\{1, \ldots, n\}$.

Observe that for input sequences in \eqref{eq_input}, we have $sim(x,\wx)\le \delta$ (just because they coincide in at least $(1 - \delta) n$ positions). The following lemma finishes the proof of the theorem, establishing that global similarity + last-position similarity is preserved through an attention layer.
    \begin{lemma}
    \label{lem_main}
    Let $L$ be a decoder-only attention layer with compact positional encoding, and let  $K\subseteq\mathbb{R}^d$ be a compact set. Then for any $\varepsilon > 0$ there exists $\delta > 0$ such that the following holds. For any $n\in\mathbb{N}$, and for any two sequences of vectors $x, \wx\in K^n$, we have:
    \[sim(x, \wx)\le \delta,\,\, \|x_n - \wx_n\| \le \delta \implies sim(y,\wy)\le\varepsilon,\,\, \|y_n- \wy_n\|\le \varepsilon,\]
    where
    \[(y_1, \ldots, y_n) = L(x), \qquad (\wy_1, \ldots, \wy_n) = L(\wx).\]
\end{lemma}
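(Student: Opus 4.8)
The plan is to reduce the lemma to the uniform continuity and boundedness of $\mathrm{val}$, $w$, $F$ on the relevant compact sets, the only genuinely new ingredient being careful bookkeeping of how the few ``bad'' positions influence the attention vectors. First I would fix the constants. Since the positional encoding of $L$ is compact, all triples $(x_i,x_j,p(i,j))$ lie in a fixed compact set $\Omega = K\times K\times P$, so $w$ is bounded there, $0 < w_{\min}\le w(\cdot)\le w_{\max}$, and uniformly continuous; likewise $\mathrm{val}(K)$ is compact, so $\|v_j\|\le M$, and the attention vectors $a_j$, being convex combinations of the $v_i$, satisfy $\|a_j\|\le M$ and lie in the compact set $\overline{\mathrm{conv}}(\mathrm{val}(K))$, on which — crossed with $K$ — the function $F$ is uniformly continuous. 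Let $\eta(\delta)$ be a common modulus of continuity for $\mathrm{val}$ and $w$, so $\eta(\delta)\to 0$ as $\delta\to 0$.

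The heart of the argument is an estimate on the attention vectors. Call a position $i$ \emph{good} if $\|x_i-\wx_i\|\le\delta$; the hypothesis $sim(x,\wx)\le\delta$ leaves at most $\delta n$ bad positions, and $n$ is good by hypothesis. Fix a good $j$ with $j\ge\sqrt{\delta}\,n$, and write $a_j = N_j/S_j$ with $N_j = \sum_{i\le j} w_{ij}v_i$, $S_j = \sum_{i\le j} w_{ij}$ (and similarly with hats). Starting from the identity
\[ a_j - \wy\text{-analogue} \qquad a_j - \wa_j \;=\; a_j\cdot\frac{\wS_j - S_j}{\wS_j} \;+\; \frac{N_j - \wy N_j\text{-analogue}}{\wS_j}, \]
that is $a_j - \wa_j = a_j\frac{\wS_j - S_j}{\wS_j} + \frac{N_j - \wA_j}{\wS_j}$ where $\wS_j,\wA_j$ denote the hatted denominator and numerator, I would bound $\wS_j \ge j\,w_{\min}$ and split both $\wS_j - S_j$ and $N_j-\wA_j$ into a sum over good $i\le j$ and over bad $i\le j$: the good terms contribute $O(j\,\eta(\delta))$ by uniform continuity (using that $j$ itself is good, so the first two arguments of $w$ move by at most $\delta$), and the bad terms are crudely bounded using $w_{\min},w_{\max},M$ by $O(\delta n)$. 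Dividing by $j\,w_{\min}$ and invoking $\delta n/j\le\sqrt{\delta}$ yields $\|a_j-\wa_j\|\le C\bigl(\eta(\delta)+\sqrt{\delta}\,\bigr)$ for a constant $C=C(M,w_{\min},w_{\max})$, uniformly in $j$ and $n$.

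To conclude: given $\varepsilon$, choose $\varepsilon' > 0$ from uniform continuity of $F$ so that inputs within $\varepsilon'$ give outputs within $\varepsilon$, then choose $\delta$ small enough that $C(\eta(\delta)+\sqrt{\delta})\le\varepsilon'$, that $\delta\le\varepsilon'$, and that $3\sqrt{\delta}\le\varepsilon$. The positions $j$ that are bad or satisfy $j<\sqrt{\delta}\,n$ number at most $\delta n + \sqrt{\delta}\,n\le 2\sqrt{\delta}\,n$; for every other $j$ the estimate gives $\|a_j-\wa_j\|\le\varepsilon'$ and $\|x_j-\wx_j\|\le\delta\le\varepsilon'$, hence $\|y_j-\wy_j\| = \|F(a_j,x_j)-F(\wa_j,\wx_j)\|\le\varepsilon$. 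So $\|y_j-\wy_j\|\le\varepsilon$ for all but at most $2\sqrt{\delta}\,n\le\varepsilon n$ indices, i.e. $sim(y,\wy)\le\varepsilon$; and since $n$ is good and $n\ge\sqrt{\delta}\,n$, the same estimate applied at $j=n$ gives $\|y_n-\wy_n\|\le\varepsilon$.

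The main obstacle is precisely the core estimate, and the subtlety there — as the paper emphasizes — is that a bad position $i$ can carry non-negligible attention weight at a \emph{small} index $j$, so one genuinely cannot expect $a_j\approx\wa_j$ at every $j$; the fix is that there are only $\delta n$ bad positions \emph{in total}, so the indices $j$ where they exceed a $\sqrt{\delta}$-fraction of $\{1,\dots,j\}$ are confined to $j<\sqrt{\delta}\,n$, a vanishing fraction of all positions — balancing these two effects via the $\sqrt{\delta}$ threshold is what makes the induction in Theorem \ref{thm_main} close. A secondary point to handle cleanly is that the estimate legitimately needs $j$ to be good, which is harmless since $n$ is good and the at most $\delta n$ bad $j$'s are absorbed into the exceptional set.
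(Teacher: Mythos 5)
Your proposal is correct and follows essentially the same route as the paper: uniform continuity and boundedness of $w$, $val$, $F$ on compacts, a lower bound on the attention denominator that is linear in the position index, and a split of the sum into the at most $\delta n$ ``bad'' positions (bounded crudely) versus the good ones (bounded by the modulus of continuity), with the positions $j$ where bad indices could dominate confined to a vanishing initial fraction. The only difference is organizational: the paper first proves a weak lemma for the last position only and then bootstraps to global similarity by applying it to prefixes of length $j \ge \varepsilon n/2$, whereas you run the same estimate for all good $j \ge \sqrt{\delta}\,n$ in a single pass -- an equivalent bookkeeping choice.
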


The only thing it remains to note, besides the proof of Lemma \ref{lem_main}, is why after any number of layers, the vectors $y_i^{(t)}, \wy_i^{(t)}$ belong to some compact set $K$, depending solely on the transformer but not on $n, u,v$. This is proved by induction over layers by continuity of the value and activation functions $val$ and $F$. The output in the $n$-th position of an attention layer is computed as $y_n = F(a_n, x_n)$, where $x_n$ is the input vector to the attention layer in the $n$-th position, and $a_n$ is the attention vector in that position. All vectors $x_n$ come from a compact $K$. Value vectors $v_n= val(x_n)$ thus belong to $val(K)$, which is a compact set by continuity of $val$. Take now any closed ball $B$ containing both $K$ and $val(K)$. Vectors $a_n$, as convex combinations of $v_n$'s, belong to $B$. It remains to observe that $F(B\times B)$ is compact by continuity of $F$.

\subsubsection{Proof of Lemma \ref{lem_main}}
It turns out that it is enough to establish the following weaker version of Lemma \ref{lem_main}, where we forget about global similarity of output sequences $y, \wy$.

\begin{lemma}
\label{lem_weak}
   Let $L$ be a decoder-only attention layer with compact positional encoding, and let  $K\subseteq\mathbb{R}^d$ be a compact set. Then for any $\varepsilon > 0$ there exists $\delta > 0$ such that the following holds. For any $n\in\mathbb{N}$, and for any two sequences of vectors $x, \wx\in K^n$, we have:
    \[sim(x, \wx)\le \delta,\,\, \|x_n - \wx_n\| \le \delta \implies\|y_n- \wy_n\|\le \varepsilon,\]
    where
    \[y = (y_1, \ldots, y_n) = L(x), \qquad \wy = (\wy_1, \ldots, \wy_n) = L(\wx).\]
\end{lemma}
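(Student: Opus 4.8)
## Proof plan for Lemma \ref{lem_weak}

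The plan is to analyze the single attention layer $L$ directly, controlling how the attention vector $a_n$ (and hence $y_n = F(a_n,x_n)$) changes when we pass from $x$ to $\wx$. Write $w_{in} = w(x_i,x_n,p(i,n))$ and $v_i = val(x_i)$, so that $a_n = \bigl(\sum_{i=1}^n w_{in}v_i\bigr)/\bigl(\sum_{i=1}^n w_{in}\bigr)$, and similarly $\wa_n$ with hats. By the compactness argument already in the proof of Theorem \ref{thm_main}, all $x_i,\wx_i$ lie in $K$, so $v_i,\wv_i$ lie in the compact set $val(K)$, and $w$ restricted to $K\times K\times (\text{compact})$ is uniformly continuous and attains a strictly positive minimum $c>0$ and a finite maximum $C$. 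Thus every weight $w_{in}$ lies in $[c,C]$. Since $\|x_n - \wx_n\|\le\delta$, uniform continuity of $w$ and $val$ gives: for each $i$, $|w_{in} - \ww_{in}|$ is small provided \emph{both} $\|x_i-\wx_i\|$ is small and $\|x_n-\wx_n\|$ is small; moreover on the "good" coordinates $i$ (those with $\|x_i-\wx_i\|\le\delta$) we also have $\|v_i - \wv_i\|$ small. Let $G\subseteq\{1,\dots,n\}$ be this good set; by $sim(x,\wx)\le\delta$ we have $|G|\ge (1-\delta)n$, so $|G^c|\le\delta n$.

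The key step is then a purely arithmetic estimate on convex-combination-type ratios. Split $a_n = \frac{\sum_{i\in G} w_{in}v_i + \sum_{i\in G^c} w_{in}v_i}{\sum_{i\in G} w_{in} + \sum_{i\in G^c} w_{in}}$. Because all weights are in $[c,C]$ and $|G|\ge (1-\delta)n$ while $|G^c|\le\delta n$, the total mass $\sum_{i\in G^c}w_{in}$ of the bad coordinates is at most $C\delta n$, while the good mass is at least $c(1-\delta)n$; hence the bad coordinates carry at most a fraction $\frac{C\delta}{c(1-\delta)}$ of the total weight, which tends to $0$ as $\delta\to 0$. Since $v_i,\wv_i$ all lie in the bounded set $val(K)$, dropping (or mismatching) a weight fraction $\eta$ changes the convex combination by $O(\eta\cdot\mathrm{diam}(val(K)))$. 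On the good coordinates, $|w_{in}-\ww_{in}|\le\rho(\delta)$ and $\|v_i-\wv_i\|\le\rho(\delta)$ for a modulus $\rho$ with $\rho(\delta)\to0$; a standard computation shows that if two sets of weights in $[c,C]$ differ coordinatewise by at most $\rho$ and the corresponding value vectors (bounded by $M$) differ by at most $\rho$, then the two weighted averages differ by at most $O(\rho/c \cdot M)$. Combining the good-coordinate perturbation bound with the bad-coordinate mass bound yields $\|a_n - \wa_n\| \le \phi(\delta)$ with $\phi(\delta)\to 0$ as $\delta\to 0$. Finally, $y_n = F(a_n,x_n)$ and $\wy_n = F(\wa_n,\wx_n)$ with $a_n,\wa_n$ in a fixed compact ball $B\supseteq val(K)$, $x_n,\wx_n\in K$, and $\|a_n-\wa_n\|,\|x_n-\wx_n\|$ both at most $\max(\phi(\delta),\delta)$; uniform continuity of $F$ on $B\times K$ gives $\|y_n-\wy_n\|\le\varepsilon$ once $\delta$ is chosen small enough.

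The main obstacle is exactly the asymmetry caused by causal masking that the introduction flags: the bad coordinates $G^c$ are not negligible by a softmax-dispersion argument (they can include the influential early tokens), so one must control them purely through the crude bound that each weight is at most $C$ and there are at most $\delta n$ of them, against a good mass of order $(1-\delta)n$. This is why the argument only needs $w$ bounded above and below on the relevant compact set — no Lipschitz bound, no decay of positional encodings — but it does crucially use that $|G^c|/n\le\delta$ is small, i.e. that the relative Hamming distance, not the absolute number of flips, is what matters. One has to be slightly careful that the modulus of continuity $\rho$ for $w$ depends on the joint displacement $(\|x_i-\wx_i\|,\|x_n-\wx_n\|)$, but since both are bounded by $\delta$ this is harmless. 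All the remaining pieces — extracting $sim(y,\wy)\le\varepsilon$ from the weaker Lemma \ref{lem_weak} to recover the full Lemma \ref{lem_main}, and then chaining the layers — are handled by the surrounding text; here we only need the weak, last-coordinate statement.
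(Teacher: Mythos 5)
Your proposal is correct and follows essentially the same route as the paper's proof: compactness gives weights in a fixed interval $[c,C]$ and uniform-continuity moduli for $w$ and $val$, the set of ``bad'' positions has size at most $\delta n$ and hence carries an $O(\delta)$ fraction of the attention mass, the remaining positions are handled by the moduli, and uniform continuity of $F$ on a compact set converts closeness of $a_n,\wa_n$ (and of $x_n,\wx_n$) into closeness of $y_n,\wy_n$. The only difference is bookkeeping: you split the weighted average into good/bad convex parts, while the paper runs a single common-denominator triangle-inequality estimate (denominator at least $c^2n^2$, numerator $o(1)n^2$ as $\delta\to 0$) — the underlying mechanism is identical.
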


Let us start by deriving Lemma \ref{lem_main} from Lemma \ref{lem_weak}.
We have to derive that if $\delta$ is small enough, then not only the last two output vectors  $y_n, \wy_n$ are $\varepsilon$-close (as Lemma \ref{lem_weak} says), but at least $(1 - \varepsilon)n$ output vectors are $\varepsilon$-close.

Imagine we start with two sequences of vectors $x, \wx\in K^n$ such that $sim(x,\wx) \le \delta$ and $\|x_n - \wx_n\|\le \delta$. Let $E\subseteq\{1, \ldots, n\}$ be the set of ``bad positions'' (where $\|x_j - \wx_j\| > \delta$). By definition of the similarity distance, its ``relative size'' (the fraction $|E|/n$) is at most $\delta$. Next, for any $\delta_1 > 0$,  by choosing $\delta$ to be small enough, we can achieve that the relative size of $E$ in the restriction to the first $j$ positions, for any $j \ge \varepsilon n/2$, does not exceed $\delta_1$. For instance, by setting $\delta = (\delta_1 \varepsilon)/2$, we can bound:
\[\frac{|E\cap\{1, \ldots, j\}|}{j}\le \frac{|E|}{j}\le \frac{\delta n}{\varepsilon n/2} = \delta_1 n.\]

In particular, we can do that for $\delta_1$ with which the conclusion of Lemma \ref{lem_weak} is true for $\varepsilon$. As we are free to choose $\delta$, we can also assume that $\delta \le \delta_1$. We claim now that $\|y_j - \wy_j \| \le \varepsilon$ for any $j\ge \varepsilon n /2, j\notin E$. This is because all input vectors in positions not from $E$ are $\delta_1$-close (because they are even $\delta$-close), this includes $x_j, \wx_j$ as $j\notin E$, and the number of the first $j$ positions not from $E$ is at least $(1 - \delta_1) j$.

As a result, we can have $\|y_j - \wy_j \| >\varepsilon$ only for $j\in E$ or for $j < \varepsilon n/2$. Thus, we can bound the number of such positions by $\varepsilon n/2 + \delta n$. By making sure that $\delta < \varepsilon/2$, we obtain that $sim(y,\wy) \le \varepsilon$.

It remains to establish Lemma \ref{lem_weak}.

\begin{proof}[Proof of Lemma \ref{lem_weak}]

Take a closed ball $B$ with the center at $0$ that contains $K, val(K)$, and $p(i, j)$ for all $i, j\in\mathbb{N}$. Such $B$ exists because $K$ is compact, $val$ is continuous which means that $val(K)$ is compact, and because the positional encoding is compact meaning that $p(i, j)$ all belong to some fixed compact for $i, j\in\mathbb{N}$. Observe that on both inputs,  $a_j$'s (attention vectors), also belong to $B$ as convex combinations of values vectors. Let $R$ be the radius of $B$.


We have
\[y_n = F(a_n, x_n), \qquad \wy_n = F(\wa_n, \wx_n),\]
where $F$ is the activation function of $L$, and $a_n, \wa_n$ are the $n$-th attention vectors on inputs $x$ and $\wx$, respectively. By uniform continuity of $F$ on the compact $B\times B$, there exists $\delta_1 > 0$ such that
\[\|a_n - \wa_n\|\le \delta_1,\,\, \|x_n - \wx_n\| \le \delta_1 \implies \|F(a_n, x_n) - F(\wa_n, \wx_n) \| \le \varepsilon. \]
It now suffices to show the existence of $0 < \delta \le \delta_1$ such that
\[sim(x, \wx)\le \delta,\,\, \|x_n - \wx_n\| \le \delta \implies \|a_n - \wa_n\| \le \delta_1.\]

Let $w\colon (\mathbb{R}^d)^3 \to (0, + \infty)$ be the weight function of $L$. We apply it only to inputs from a fixed compact set $B^3$. Hence, we can assume that for some universal constants $0 < c < C$, the function $w$ takes values in $[c,C]$. Moreover, by the uniform continuity of $w$ on $B^3$, if we change each of 3 inputs by at most $\delta$ in the norm, the output value changes by at most $c_\delta$, for some $c_\delta \to 0$ as $\delta \to 0$.

Similarly, there exists $d_\delta$ with $d_\delta\to 0$ as $\delta \to 0$ such that $\|val(x) - val(\wx)\| \le d_\delta$ for all $x, \wx\in K$ with $\|x - \wx\| \le \delta$.

The norm of the difference $a_n - \wa_n$ can be bounded as:
\begin{equation}
\label{eq_abound}
    \|a_n - \wa_n\| = \left\|\frac{\sum\limits_{i = 1}^n(w_{in} v_i \wW_n - \ww_{in}\wv_{i} W_n)}{W_n \wW_n}\right\| \le \frac{\sum\limits_{i = 1}^n \left\|w_{in} x_i \wW_n - \ww_{in}\wv_{i} W_n\right\|}{W_n \wW_n},
    \end{equation}

where 
\begin{align*}
v_i &= val(x_i), \qquad \wv_i = val(\wx_i),\\
    w_{in} &= w(x_i,x_n,p(i,n)), \qquad \ww_{in} = w(\wx_i,\wx_n,p(i,n)),\qquad i = 1, \ldots, n,\\
    W_n &= w_{1n} + \ldots + w_{nn}, \qquad \wW_n  = \ww_{1n} + \ldots + \ww_{nn}. 
\end{align*}
Let $E$ denote the set of positions $i\in\{1, \ldots, n\}$ with $\|x_i - \wx_i\|> \delta$. Since $sim(x,\wx)\le \delta$, we have $|E|\le \delta n$. Since we are additionally given that $\|x_n - \wx_n\| \le \delta$, by definition of $c_\delta$, we have:
\begin{equation}
    \label{eq_notinE}
    |w_{in} - \ww_{in}| \le c_\delta \qquad \text{for } i \notin E.
\end{equation}
In turn, by definition of $d_\delta$, we have:
\begin{equation}
    \label{eq_val}
    \|v_i - \wv_i\| \le d_\delta \qquad \text{for } i \notin E.
\end{equation}

Now, since the function $w$ takes values in the interval $[c, C]$,  we obtain the following bound:
\begin{equation}
    \label{eq_inE}
    |w_{in} - \ww_{in}| \le 2C \qquad \text{for } i \in E.
\end{equation}
From (\ref{eq_notinE}--\ref{eq_inE}), using the bound $|E|\le \delta n$, we derive:
\begin{equation}
    \label{eq_Wbound}
    |W_n - \wW_n| \le |w_{1n} - \ww_{1n}| + \ldots + |w_{nn} - \ww_{1n}| \le c_\delta n + 2C |E| \le (c_\delta + 2C \delta) n
\end{equation}
(importantly, the coefficient before $n$ in the last upper bound goes to $0$ as $\delta \to 0$).

We now upper bound the right-hand side of \eqref{eq_abound}. First, the denominator there is at least $c^2 n^2$, because the weight function is at least $c$ for inputs under consideration. It remains to upper bound the numerator by an expression $f(\delta) n^2$ for some function $f(\delta)\to 0$ as $\delta\to 0$.  Each term in the numerator we bound using the triangle inequality:
\begin{align*}
    \|w_{in} \wW_n v_i - \ww_{in} W_n\wv_i \| &\le \|(w_{in} - \ww_{in}) \wW_n v_i\| \\&+ \|\ww_{in}(\wW_n - W_n) v_i\| \\&+ \|\ww_{in}W_n( v_i - \wv_i)\|.
\end{align*}
We have $w_{in}, \ww_{in}\le C$, $W_n, \wW_n \le Cn$, and $\|v_i\|, \|\wv_i\| \le R$ (the weight function is bounded from above by $C$, and value vectors $v_i$, $\wv_i$ come from $B$, the ball of radius $R$ with the center at the origin). Overall, we get:
\begin{align*}
    \|w_{in} \wW_n x_i - \ww_{in} W_n\wx_i \| &\le \big( |w_{in} - \ww_{in}| \cdot C R+ |\wW_n/n - W_n/n|\cdot CR + \| v_i - \wv_i\| \cdot C^2\big) n
\end{align*}
For $i\notin E$, this expression is bounded by $o(1) n$ as $\delta\to\infty$ by  \eqref{eq_notinE}, \eqref{eq_val}, and \eqref{eq_Wbound}. For $i\in E$, this expression is $O(n)$. Overall, the numerator in \eqref{eq_abound} is upper bounded by $o(1) n^2 + |E| \cdot O(n) \le o(1) n^2 + O(\delta n^2) = o(1) n^2 $ as $\delta \to 0$, as required.
\end{proof}

\end{proof}

\begin{remark}
    Since for Encoders the proportion of exceptions does not change with the iterations (as in the decoder-only), the Theorem also holds for them (and the argument is actually simpler). 
\end{remark}

\subsection{Proof of Proposition \ref{prop_findif}}
We show that if a compact decoder-only transformer $T$ eventually learns  $\alpha$, and $\walpha$ differs from $\alpha$ in finitely many places, then $T$ eventually learns $\walpha$ too. By definition, there exist $\varepsilon > 0$ and $n_0$ such that for all $n\ge n_0$, we have:
\begin{equation}
    \label{alpha_eq1}
    T(\alpha_1\ldots \alpha_n)(\alpha_{n+1}) \ge T(\alpha_1\ldots \alpha_n)(\sigma) + \varepsilon
\end{equation}
    for any $\sigma\in \Sigma\setminus\{\alpha_{n+1}\}$. We show that the same holds for $\walpha$ and for all large enough $n$ with $\varepsilon/2$. By Theorem \ref{thm_main}, there exists $\delta > 0$ such that on any two sequences of tokens with the same length, the same last token, and of relative Hamming distance at most $\delta$, the output distributions of $T$ are $(\varepsilon/10)$-close in $\ell_\infty$-norm. Since $\alpha$ and $\walpha$ differ just in finitely many places, we have $\alpha_n = \walpha_n, \alpha_{n+1} = \walpha_{n+1}$ and $d_H(\alpha_1\ldots \alpha_n, \walpha_1\ldots \walpha_n) \le \delta$ for all large enough $n$. For such $n$, if we replace every occurrence of $\alpha$ by $\walpha$ in \eqref{alpha_eq1}, the left-hand and the right-hand side change by at most $\varepsilon/10$, preserving the inequality with $\varepsilon/2$, as required.

\subsection{Proof of Theorem \ref{thm_isolation}}
If $\alpha$ is eventually learnable by $T$, by definition, there exist $\varepsilon > 0$ and $n_0$ such that for all $n\ge n_0$, we have:
\begin{equation}
    \label{alpha_eq}
    T(\alpha_1\ldots \alpha_n)(\alpha_{n+1}) \ge T(\alpha_1\ldots \alpha_n)(\sigma) + \varepsilon
\end{equation}
    for any $\sigma\in \Sigma\setminus\{\alpha_{n+1}\}$.
By Theorem \ref{thm_main}, there exists $\delta > 0$ such that for any two finite sequences of tokens that have the same length, the same last token, and are of relative Hamming distance at most $\delta$, the output distributions of $T$ on them are $\varepsilon/3$-close in the $\ell_\infty$-norm. We now take an arbitrary infinite sequence $\beta$ that differs from $\alpha$ in infinitely many places and satisfies $d_H(\alpha, \beta) \le \wdelta = \min\{\delta/3, 1/3\}$, and show that $\beta$ is not eventually learnable by $T$.

To this end, it is enough to show that
\begin{equation}
    \label{beta_eq}
    T(\beta_1\ldots \beta_n)(\alpha_{n+1}) > T(\beta_1\ldots \beta_n)(\beta_{n+1}),
\end{equation}
for infinitely many $n$.
This would contradict eventual learnability of $\beta$ by $T$ as $\beta_{n+1}$ has to be the top-probability token of $T(\beta_1\ldots\beta_n)$ starting from some $n$.

We take an arbitrary $N_0\in\mathbb{N}$ and show the existence of $n\ge N_0$ for which \eqref{beta_eq} holds. Since 
\[d_H(\alpha, \beta) =  \liminf\limits_{n\to\infty} d_H(\alpha_1\ldots \alpha_n, \beta_1\ldots \beta_n) \le \wdelta,\]
there exists $m \ge \max\{2N_0, 2n_0\}$ such that $d_H(\alpha_1 \ldots \alpha_m, \beta_1\ldots \beta_m) \le 1.1\wdelta$. The sequences $\alpha_1 \ldots \alpha_m$ and $\beta_1 \ldots \beta_m$ cannot differ in all positions of the second half of these sequences because their relative Hamming distance is bounded by $1.1\wdelta \le 1.1 \cdot (1/3) < 1/2$. Hence, we have $\alpha_\ell = \beta_\ell$ for some $\ell \in  [m/2,\,  m]$. The relative Hamming distance between $\alpha_1\ldots \alpha_\ell$ and $\beta_1\ldots \beta_\ell$ is at most twice the relative Hamming distance between $\alpha_1 \ldots \alpha_m$ and $\beta_1\ldots \beta_m$. Indeed, the first pair can only have fewer differences, and the length of sequences in the first pair (that goes into the denominator in the relative Hamming distance) is at most twice smaller than in the second pair. This gives us
\[d_H(\alpha_1\ldots \alpha_\ell, \beta_1\ldots \beta_\ell)\le 2.2 \wdelta \le 2.2 (\delta/3) < \delta \]
for some $\ell \ge m/2 \ge \max\{N_0, n_0\}$ such that $\alpha_\ell = \beta_\ell$. Take the smallest $n\ge \ell$ such that $\alpha_{n+1}\neq \beta_{n+1}$ which exists because $\alpha$ and $\beta$ have infinitely many differences. Observe that:
\[d_H(\alpha_1\ldots \alpha_n, \beta_1\ldots \beta_n)\le d_H(\alpha_1\ldots \alpha_\ell, \beta_1\ldots \beta_\ell)\le \delta\]
because $\alpha_1\ldots \alpha_n$ and $\beta_1\ldots \beta_n$ are obtained from $\alpha_1\ldots \alpha_\ell$ and $\beta_1\ldots \beta_\ell$ by appending some number of equal tokens. By definition of $\delta$, the distributions $T(\alpha_1\ldots \alpha_n)$ and $T(\beta_1\ldots\beta_n)$ are $(\varepsilon/3)$-close in the $\ell_\infty$-norm (observe that $\alpha_n = \beta_n$ as otherwise we could take smaller $n$, so the last tokens coincide and continuity can be used).  Since $n\ge \ell\ge n_0$, we have \eqref{alpha_eq} for $n$. If we replace $T(\alpha_1\ldots \alpha_n)$ by $T(\beta\ldots \beta_n)$, both the left-hand and the right-hand sides change by at most $\varepsilon/3$, meaning strict inequality is preserved for any $\sigma$,  in particular for $\sigma = \beta_{n+1}$. We thus obtain \eqref{beta_eq} for $n\ge \ell \ge N_0$, as required.

\subsection{Proofs of Corollary \ref{cor_per}}

For Corollary \ref{cor_per}, assume for contradiction there is a decoder-only compact transformer $T$ that eventually learns all periodic sequences. In particular, it learns the all-0 sequence, and by Theorem \ref{thm_isolation} there exists $\delta > 0$ such that no sequences $\beta$, having infinitely many 1s and satisfying $d_H(\alpha,\beta) \le \delta$ is eventually learnable by $T$. On the other hand, there is a periodic sequence $\beta$, satisfying these restrictions, namely
\[\beta = \underbrace{00\ldots 01}_k\underbrace{00\ldots 01}_k\underbrace{00\ldots 01}_k\ldots\]
for any $k \ge 1/\delta$. It has infinitely many 1s, but $d_H(\alpha, \beta) = 1/k \le \delta$, a contradiction.

\section{Supporting figures}

\subsection{Visualization of Decoder-only Architecture}

Figure~\ref{fig:model_viz} presents a high-level visualization of a standard decoder-only transformer, illustrating how an input string $\alpha$ is processed through $k$ attention layers in a causal and sequential manner. Each token is first embedded via a function $e$, and then transformed layer by layer—respecting the prefix-monotonicity of the architecture, where the output at position $j$ depends only on the first $j$ tokens. This structure enforces the autoregressive property fundamental to decoder-only transformers. The final layer produces a vector $y_n$, which is mapped to a probability distribution over the vocabulary via a projection function $P$. This figure serves to clarify the computational assumptions underlying our theoretical results.

\begin{figure*}[htbp!]
    \centering
    \includegraphics[width=0.9\textwidth]{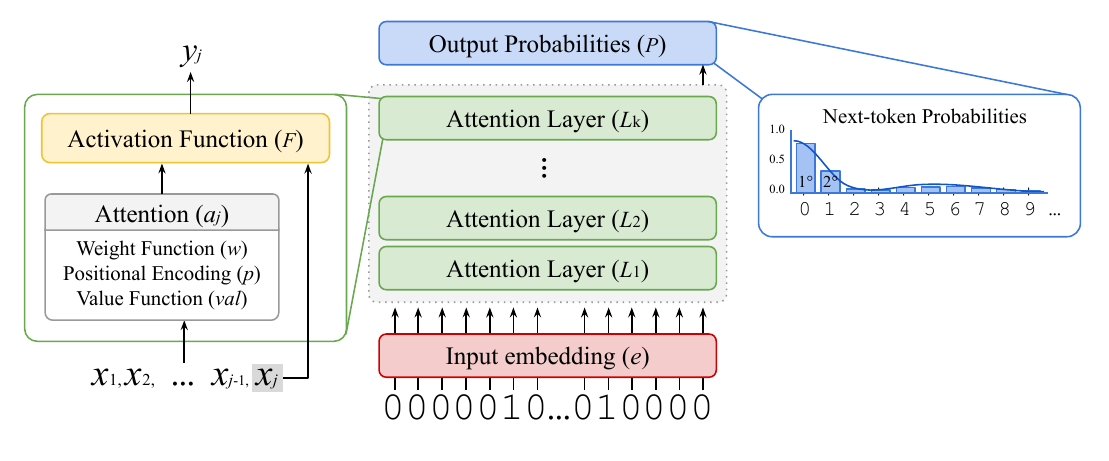}
    \caption{Schematic of a decoder-only transformer.}
    \label{fig:model_viz}
\end{figure*}








\subsection{Visualization of Continuity}
\label{continuity_viz}

\begin{figure*}[htbp!]
    \centering
    \includegraphics[width=0.8\textwidth]{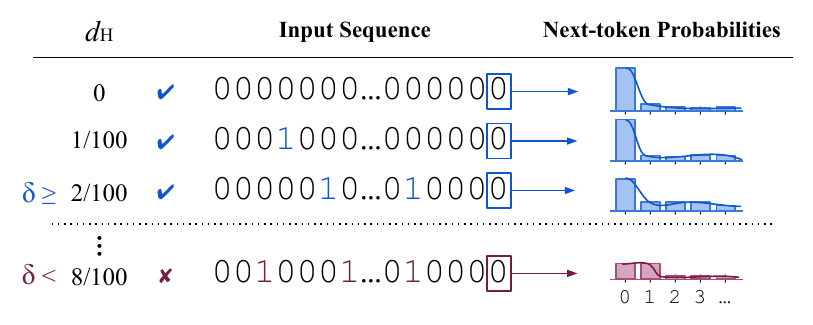}
    \caption{Illustration of continuity in decoder-only transformers under small input perturbations.}
    \label{fig:main_viz}
\end{figure*}

Figure~\ref{fig:main_viz} visualizes the continuity property established in Corollary~\ref{cor:next-token}, which shows that decoder-only transformers are stable under small input perturbations. In this example, we compare two sequences: $\alpha$, consisting of 100 zeros, and $\beta$, which is derived from $\alpha$ by flipping some zeros to ones. The relative Hamming distance \( d_H(\alpha, \beta) \) determines whether the perturbation is within a predefined threshold $\delta$. When this threshold is satisfied, the transformer’s next-token output distribution remains within a small $\varepsilon$ distance of the original. The figure highlights both perturbed sequences that respect the $\delta$-constraint and those that do not, illustrating how small input changes can result in correspondingly small or large changes in the output distribution.

\subsection{Effect of Divergent Positions} \label{sec:divergent-position}

We explore how the sensitivity of models varies with the position of input perturbations. Specifically, we measure the behavior of the model when a fraction of the symbols in a sequence of zeros are flipped to ones. To parametrically control where these flips occur along the input sequence, we sample discrete positions using the Beta-Binomial distribution, whose probability mass function is given by
\[
\textsc{BetaBinomial}(k \mid n, u, v) = \binom{n}{k} \frac{B(k + u, n - k + v)}{B(u, v)},
\]
where \( k \in \{0, 1, \ldots, n\} \) denotes the position index, \( n \) is the sequence length (equal to $189$ in our experiments), \( B(\cdot, \cdot) \) denotes the \textit{Beta function}, and \( u \), \( v \) are the shape parameters. These parameters control the positional bias of the perturbations: toward the beginning (\( u \ll v \)), center (\( u = v \), with \( u, v \geq 1 \)), or end (\( u \gg v \)) of the sequence.

The plots in 
Figure~\ref{fig:next_long_rel_pos} show the next-token sensitivity $\textsf{NTS}_\gamma$ over 8 different positional biases of the perturbed tokens. We observe a clear trend across models: perturbations near the end of the input have a significantly greater impact on the model’s output compared to those near the beginning or middle. This reflects a position-dependent sensitivity, where later tokens carry more influence over the model’s immediate prediction.



\begin{figure}[htbp]
    \centering
        \includegraphics[width=\textwidth]{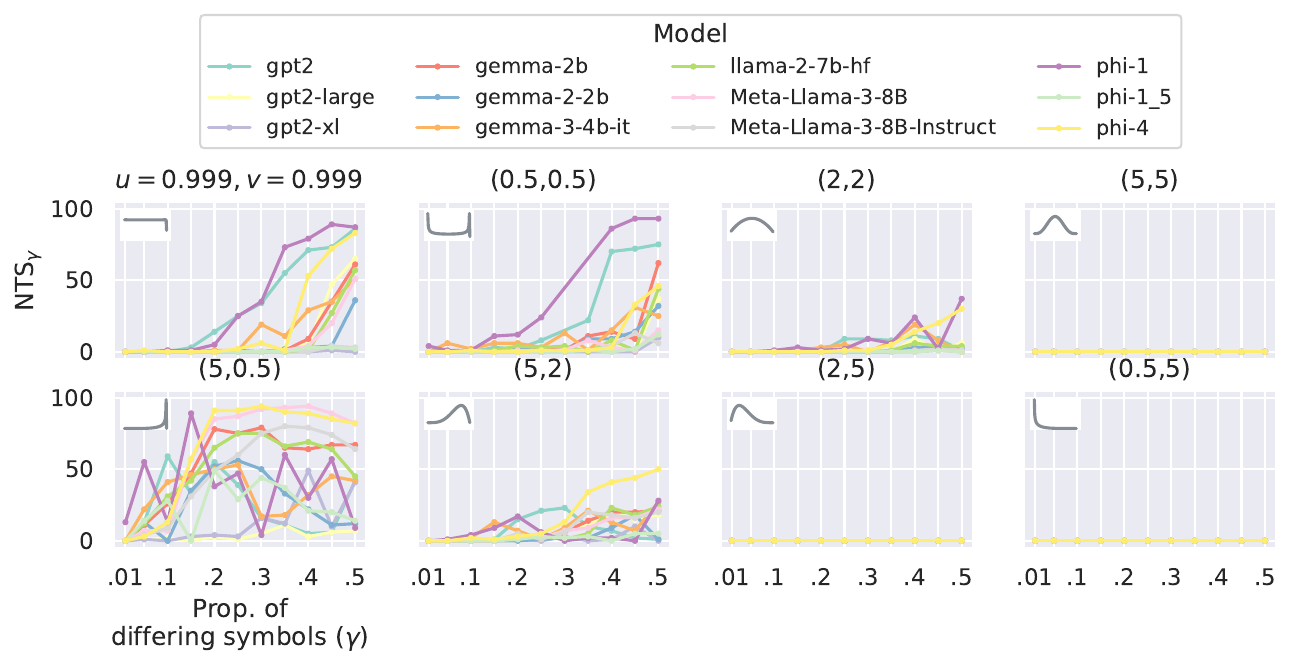}



    \caption{
Sensitivity of decoder-only language models to input perturbations, visualized across different Beta-Binomial settings. The corresponding shape parameters are provided in the title and the probability mass function is displayed in the upper left corner of each panel.
    }
    \label{fig:next_long_rel_pos}
\end{figure}

\subsection{Boosting sensitivity} \label{boost_sensitive}

We investigate how modifications to the attention aggregation function affect a model’s sensitivity to small input perturbations in the setup described in Section~\ref{sec:zero}. In particular, we compare the standard \textit{softmax} attention with \textit{ssmax}\footnote{$\textsc{ssmax}_s(z) = \textsc{softmax}((s \log n)\, z)$, where $z$ denotes logits of length $n$, and \(s \in (0,1]\) is a scaling factor, defaulting to $1$}, a variant designed to increase sensitivity. Originally introduced as \textit{log-length scaled attention} by Chiang and Cholak~\cite{chiang2022overcoming} and later revisited as \textit{scalable softmax} by Nakanishi~\cite{nakanishi2025scalable}, this formulation amplifies differences in logits based on sequence length. As illustrated in Figure~\ref{fig:softmax_vs_ssmax}, replacing softmax with ssmax significantly increases next-token variability under small input changes.

\begin{figure*}[htbp!]
    \centering
    \includegraphics[width=\textwidth]{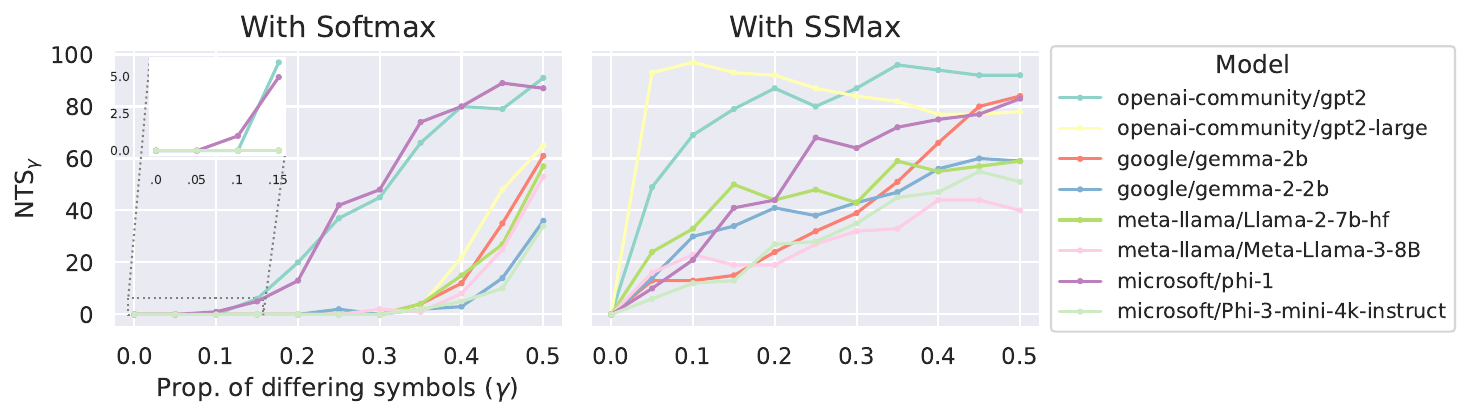}
    \caption{
        \textbf{Next-token sensitivity} of decoder-only language models to input perturbations.
        Each line shows the number of sample sequences (out of 100) that produce a different next-token than zero, as a function of the proportion of differing symbols.
        \textbf{Left:} Models with standard softmax in attention layers. 
        \textbf{Right:} Models replaced with ssmax in attention layers.
    }
    \label{fig:softmax_vs_ssmax}
\end{figure*}

\subsection{Visualization of Code Syntax Verification}
\label{syntax_verif} 

Figure~\ref{fig:viz_syntax_error} illustrates the \syntaxverification~task, where the model is presented with a sequence of Python function snippets and asked to determine whether the final snippet is syntactically correct. Each prompt consists of two small examples (F$_1$ and F$_2$) with correctness annotations, followed by a third large example (F$_3$), which is the target for prediction. We construct two versions of each prompt—one where F$_3$ is correct and one where it contains a subtle syntax error (e.g., incorrect use of a keyword such as \texttt{for}). These two prompts differ by only a few tokens, allowing us to evaluate whether the model is sensitive to small but meaningful syntactic changes.

\begin{figure*}[htbp!]
    \centering
    \includegraphics[width=0.95\textwidth]{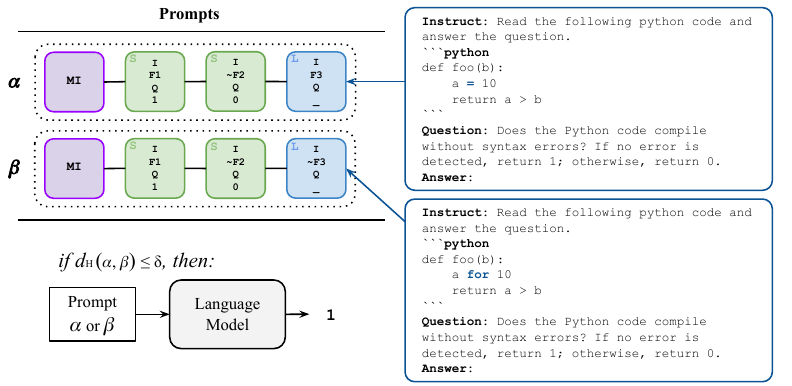}
\caption{
    Visualization of the \syntaxverification~task. The model is prompted with Python code snippets and must predict whether the final function contains a syntax error. MI refers to the main instruction in the prompt: \textit{You are a Python expert. Read the following instructions carefully and respond to the questions.}}
    \label{fig:viz_syntax_error}
\end{figure*}

We extend our analysis for \syntaxverification~task for multiple models versions (see Figure \ref{fig:more_syntax_error}). As expected, instruct-tuned models perform significantly better on this task than base models, consistent with their enhanced ability to follow instructions and answer questions. Surprisingly, even for relative simple prompts \footnote{Here we consider a smaller function dataset compared to standard dataset used for models in Section \ref{sec:emp_support_cont_code} and reasoning models in Section \ref{sec:effect-thinking-models}} error rates remain high. Instruct models still display levels of errors above 15\%, while base models above 80\%. 

\begin{figure*}[htbp!]
    \centering
    \includegraphics[width=1\textwidth]{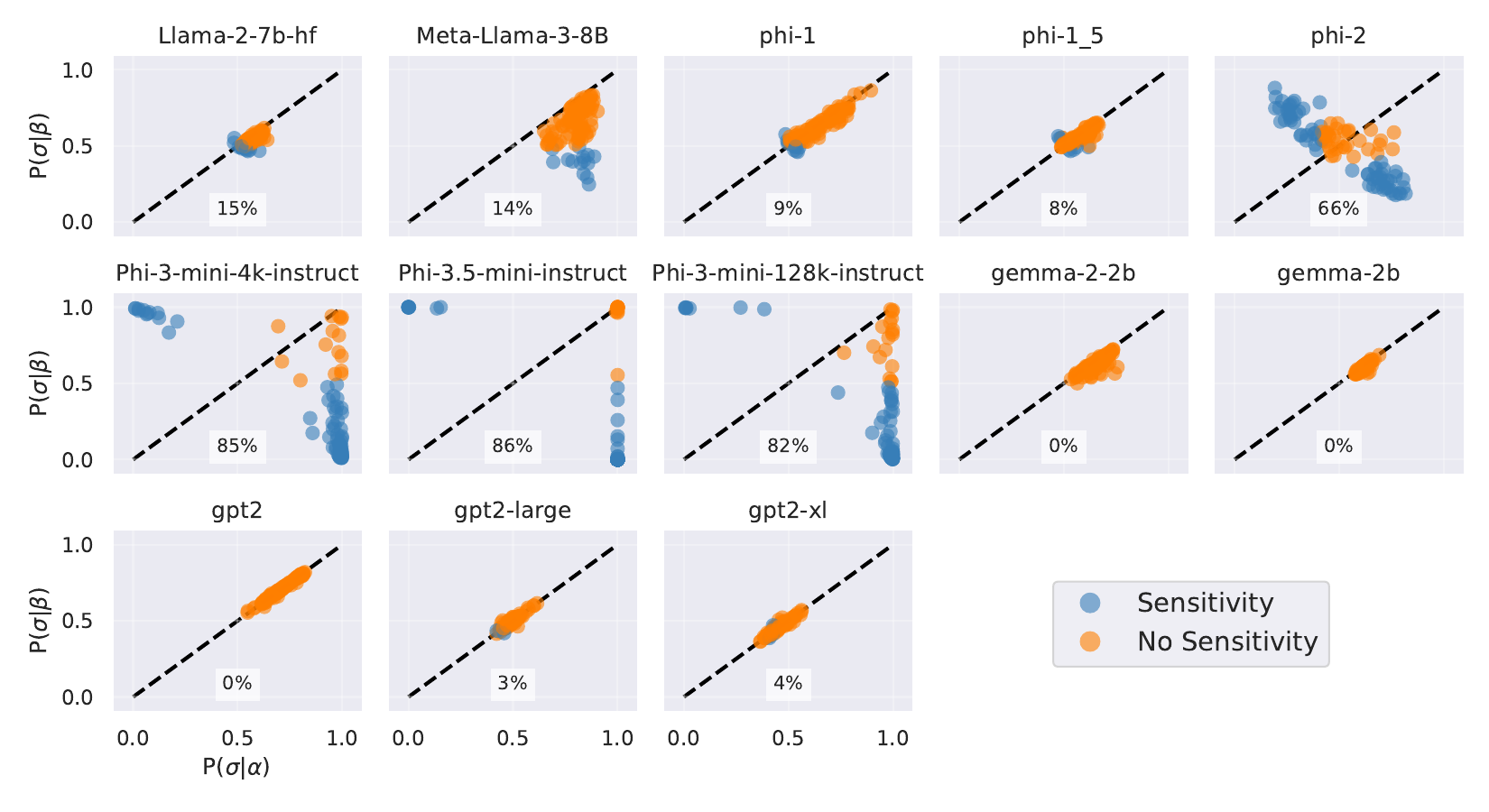}
\caption{Sensitivity in the \syntaxverification~task. This figure illustrates the sensitivity of five models to subtle syntactic changes in Python functions for pairs of input prompts. Each dot represents the model's probability assigned to a target token $\sigma$ under two prompts, $\alpha$ and $\beta$. \textcolor{newblue}{\textbf{Blue}} dots indicate \emph{sensitive} cases where the model’s output changed in response to the syntactic errors, as expected. \textcolor{neworange}{\textbf{Orange}} dots mark \emph{non-sensitive} cases where the model failed to adapt its prediction, despite the change in input. Percentages in each subplot indicate the proportion of samples where the model exhibited sensitivity. For this analysis, we consider a simpler function dataset (i.e functions with less than 100 tokens) in contrast to the main experiment, which involves longer functions (over 100 tokens) for \texttt{gemma-3-4b-it}, \texttt{gemma-3-12b-it}, \texttt{phi-4} and \texttt{Meta-Llama-3-8B-Instruct} and reasoning models.}
    \label{fig:more_syntax_error}
\end{figure*}


\subsection{Syntax Code Verification on Reasoning Models}\label{sec:effect-thinking-models}

We extend our analysis for \syntaxverification~task for reasoning models \texttt{o3-mini} and \texttt{o4-mini}, trained with long instances of CoT (Figure \ref{fig:reasoning_syntax_error}). Unsurprisingly, we observe that these models are much better at solving the task. Strikingly, we observe that for such simple task prompts, these powerful models still display levels of errors greater or equal to 20\%. These result suggest that CoT cannot always escape continuity, which continues to affect the performance at scales of practical relevance, even for advanced reasoning models.

\begin{figure*}[htbp!]
    \centering
    \includegraphics[width=.6\textwidth]{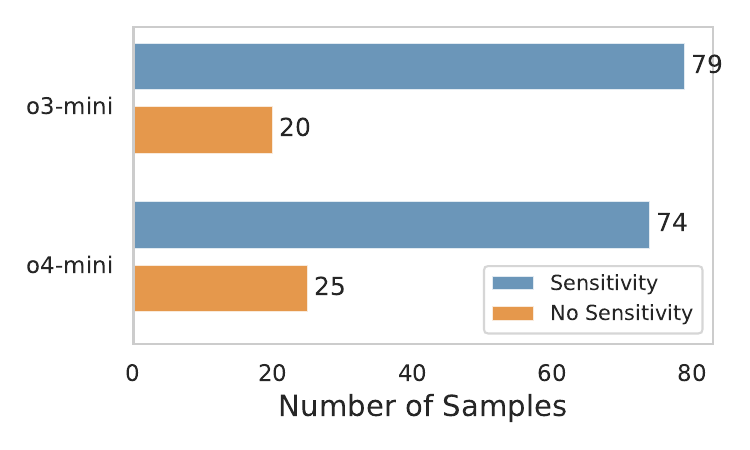}
\caption{Sensitivity in the \syntaxverification~task for reasoning models \texttt{o3-mini} and \texttt{o4-mini}. Each bar represents frequency of samples where the model was either sensitivity or not to syntactic changes, under two prompts, $\alpha$ and $\beta$ which differ by a small number of token that change the expected output. \textcolor{newblue}{\textbf{Blue}} bars represent the \emph{sensitive} cases where the model’s output changed in response to the syntactic errors, as expected. \textcolor{neworange}{\textbf{Orange}} bars represent \emph{non-sensitive} cases where the model failed to adapt its prediction, despite the change in input.}
    \label{fig:reasoning_syntax_error}
\end{figure*}

\subsection{Effect of Different Models for Periodic Generation}\label{sec:effect-different-models}

We extend our analysis of periodic pattern generation, evaluating how various decoder-only language models perform when tasked with completing structured, periodic sequences $\beta_p^r = (0^{p-1}1)^{r}0$.
This prefix is defined as $r = 10$ repetitions of a base pattern $0^{p-1}1$ plus a token zero. 

\begin{figure*}[htbp!]
    \centering
    \includegraphics[width=.8\textwidth]{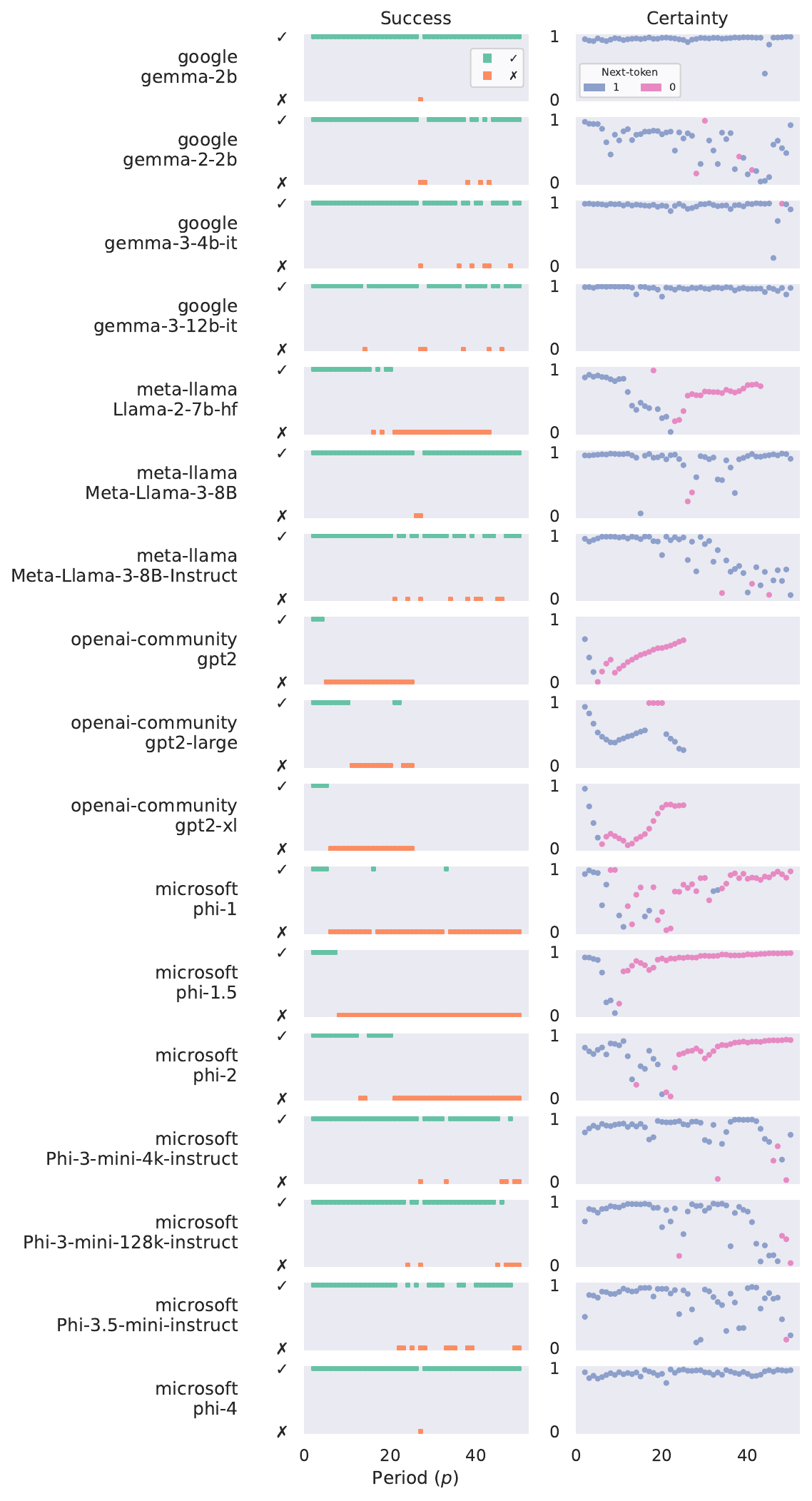}
    \caption{
        Evaluation of periodic sequence generation using diverse models. \textbf{Rows} correspond to fix language model with \( r = 10 \) repetitions.
    }
    \label{fig:periodic_models}
\end{figure*}

Figure~\ref{fig:periodic_models} presents results for several CPE transformer models of varying sizes. We observe that each model exhibits a \emph{critical period}—a threshold value of $p$ beyond which the model fails to reliably complete the pattern. For smaller periods (e.g., $p \leq 10$), models achieve perfect or near-perfect extrapolation, while performance degrades as the period increases, eventually resulting in complete failure for periods beyond the model-specific threshold. Notably, all models show this limitation except some of them: for examples \texttt{gemma-2b} and \texttt{gemma-2-2b}.





\section{Prompt formatting}

We present the prompt format used for test sensitivity in models for \syntaxverification ~task. Here we follow the same intruction structure for all model adapting this to each specific special tokens. For reasoning model, we consider the same structure in order to ensure comparable results.

\begin{tcolorbox}[colback=white!95!gray, colframe=black, title=\texttt{gemma-3-it}, fonttitle=\bfseries,
  boxsep=2pt,     
  left=4pt,       
  right=2pt       
]
\scriptsize
\begin{verbatim}<bos><start_of_turn>user
You are a Python expert. Carefully read the following python codes and answer to the
questions.<end_of_turn>
<start_of_turn>model
OK.<end_of_turn>
<start_of_turn>user
Instruct: Read the following python code and answer the question.
```python
{{python_shot_function1}}
```
Question: Does the Python code compile without syntax errors? If no error is detected, return 1; other
wise, return 0.<end_of_turn>
<start_of_turn>model
Instruct: Read the following python code and answer the question.
Answer: 1<end_of_turn>
<start_of_turn>user
```python
{{python_shot_function2}}
```
Question: Does the Python code compile without syntax errors? If no error is detected, return 1; other
wise, return 0.<end_of_turn>
<start_of_turn>model
Answer: 0<end_of_turn>
<start_of_turn>user
Instruct: Read the following python code and answer the question.
```python
{{python_test_function}}
```
Question: Does the Python code compile without syntax errors? If no error is detected, return 1; other
wise, return 0.<end_of_turn>
<start_of_turn>model
Answer: \end{verbatim}
\end{tcolorbox}

\begin{tcolorbox}[colback=white!95!gray, colframe=black, title=\texttt{phi-4}, fonttitle=\bfseries,
  boxsep=2pt,     
  left=4pt,       
  right=2pt       
]
\scriptsize
\begin{verbatim}<|im_start|>system<|im_sep|>
You are a Python expert. Carefully read the following python codes and answer to the questions.<|im_end|>
<|im_start|>user<|im_sep|>
Instruct: Read the following python code and answer the question.
```python
{{python_shot_function1}}
```
Question: Does the Python code compile without syntax errors? If no error is detected, return 1; other
wise, return 0.<|im_end|>
<|im_start|>assistant<|im_sep|>
Answer: 1<|im_end|>
<|im_start|>user<|im_sep|>
Instruct: Read the following python code and answer the question.
```python
{{python_shot_function2}}
```
Question: Does the Python code compile without syntax errors? If no error is detected, return 1; other
wise, return 0.<|im_end|>
<|im_start|>assistant<|im_sep|>
Answer: 0<|im_end|>
<|im_start|>user<|im_sep|>
Instruct: Read the following python code and answer the question.
```python
{{python_test_function}}
```
Question: Does the Python code compile without syntax errors? If no error is detected, return 1; other
wise, return 0.<|im_end|>
<|im_start|>assistant<|im_sep|>
Answer: \end{verbatim}
\end{tcolorbox}

\begin{tcolorbox}[colback=white!95!gray, colframe=black, title=\texttt{Meta-Llama-3-Instruct}, fonttitle=\bfseries,
  boxsep=2pt,     
  left=4pt,       
  right=2pt       
]
\scriptsize
\begin{verbatim}<|begin_of_text|><|start_header_id|>system<|end_header_id|> 

You are a Python expert. Read the following instructions carefully and respond to the
questions.<|eot_id|><|start_header_id|>user<|end_header_id|>

Instruct: Read the following python code and answer the question.
{{python_shot_function1}}
Question: Does the Python code compile without syntax errors? If no error is detected, return 1; other
wise, return 0.<|eot_id|><|start_header_id|>assistant<|end_header_id|>

Answer: 1<|start_header_id|>user<|end_header_id|>

Instruct: Read the following python code and answer the question.
{{python_shot_function2}}
Question: Does the Python code compile without syntax errors? If no error is detected, return 1; other
wise, return 0.<|eot_id|><|start_header_id|>assistant<|end_header_id|>

Answer: 0<|eot_id|><|start_header_id|>user<|end_header_id|>

Instruct: Read the following python code and answer the question.
{{python_test_function}}
Question: Does the Python code compile without syntax errors? If no error is detected, return 1; other
wise, return 0.<|eot_id|><|start_header_id|>assistant<|end_header_id|>

Answer: \end{verbatim}
\end{tcolorbox}



\begin{thebibliography}{10}

\bibitem{barbero2024transformers}
Federico Barbero, Andrea Banino, Steven Kapturowski, Dharshan Kumaran, Jo{\~a}o
  Madeira~Ara{\'u}jo, Oleksandr Vitvitskyi, Razvan Pascanu, and Petar
  Veli{\v{c}}kovi{\'c}.
\newblock Transformers need glasses! information over-squashing in language
  tasks.
\newblock {\em Advances in Neural Information Processing Systems},
  37:98111--98142, 2024.

\bibitem{bhattamishra2020computational}
Satwik Bhattamishra, Arkil Patel, and Navin Goyal.
\newblock On the computational power of transformers and its implications in
  sequence modeling.
\newblock {\em arXiv preprint arXiv:2006.09286}, 2020.

\bibitem{bubeck2023sparks}
S{\'e}bastien Bubeck, Varun Chadrasekaran, Ronen Eldan, Johannes Gehrke, Eric
  Horvitz, Ece Kamar, Peter Lee, Yin~Tat Lee, Yuanzhi Li, Scott Lundberg,
  et~al.
\newblock Sparks of artificial general intelligence: Early experiments with
  gpt-4, 2023.

\bibitem{chiang2022overcoming}
David Chiang and Peter Cholak.
\newblock Overcoming a theoretical limitation of self-attention.
\newblock In {\em Proceedings of the 60th Annual Meeting of the Association for
  Computational Linguistics (Volume 1: Long Papers)}, pages 7654--7664, 2022.

\bibitem{garg2022can}
Shivam Garg, Dimitris Tsipras, Percy~S Liang, and Gregory Valiant.
\newblock What can transformers learn in-context? a case study of simple
  function classes.
\newblock {\em Advances in Neural Information Processing Systems},
  35:30583--30598, 2022.

\bibitem{hahn2020theoretical}
Michael Hahn.
\newblock Theoretical limitations of self-attention in neural sequence models.
\newblock {\em Transactions of the Association for Computational Linguistics},
  8:156--171, 2020.

\bibitem{hahn2024sensitive}
Michael Hahn and Mark Rofin.
\newblock Why are sensitive functions hard for transformers?
\newblock In {\em Proceedings of the 62nd Annual Meeting of the Association for
  Computational Linguistics (Volume 1: Long Papers)}, pages 14973--15008, 2024.

\bibitem{kozachinskiy2025completely}
Alexander Kozachinskiy and Tomasz Steifer.
\newblock A completely uniform transformer for parity.
\newblock {\em arXiv preprint arXiv:2501.02535}, 2025.

\bibitem{ku2025using}
Alexander Ku, Declan Campbell, Xuechunzi Bai, Jiayi Geng, Ryan Liu, Raja
  Marjieh, R~Thomas McCoy, Andrew Nam, Ilia Sucholutsky, Veniamin Veselovsky,
  et~al.
\newblock Using the tools of cognitive science to understand large language
  models at different levels of analysis.
\newblock {\em arXiv preprint arXiv:2503.13401}, 2025.

\bibitem{DBLP:conf/iclr/0001LZ024}
Zhiyuan Liu, Hong Liu, Denny Zhou, and Tengyu Ma.
\newblock Chain of thought empowers transformers to solve inherently serial
  problems.
\newblock In {\em The Twelfth International Conference on Learning
  Representations, {ICLR} 2024, Vienna, Austria, May 7-11, 2024}.
  OpenReview.net, 2024.

\bibitem{mahowald2024dissociating}
Kyle Mahowald, Anna~A Ivanova, Idan~A Blank, Nancy Kanwisher, Joshua~B
  Tenenbaum, and Evelina Fedorenko.
\newblock Dissociating language and thought in large language models.
\newblock {\em Trends in cognitive sciences}, 2024.

\bibitem{DBLP:conf/iclr/MerrillS24}
William Merrill and Ashish Sabharwal.
\newblock The expressive power of transformers with chain of thought.
\newblock In {\em The Twelfth International Conference on Learning
  Representations, {ICLR} 2024, Vienna, Austria, May 7-11, 2024}.
  OpenReview.net, 2024.

\bibitem{nakanishi2025scalable}
Ken~M Nakanishi.
\newblock Scalable-softmax is superior for attention.
\newblock {\em arXiv preprint arXiv:2501.19399}, 2025.

\bibitem{Perez2021}
Jorge Perez, Pablo Barcelo, and Javier Marinkovic.
\newblock Attention is turing-complete.
\newblock {\em Journal of Machine Learning Research}, 22(75):1--35, 2021.

\bibitem{stambler2025provably}
Lev Stambler, Seyed~Sajjad Nezhadi, and Matthew Coudron.
\newblock Provably overwhelming transformer models with designed inputs.
\newblock {\em arXiv preprint arXiv:2502.06038}, 2025.

\bibitem{strobl2024formal}
Lena Strobl, William Merrill, Gail Weiss, David Chiang, and Dana Angluin.
\newblock What formal languages can transformers express? a survey.
\newblock {\em Transactions of the Association for Computational Linguistics},
  12:543--561, 2024.

\bibitem{su2024roformer}
Jianlin Su, Murtadha Ahmed, Yu~Lu, Shengfeng Pan, Wen Bo, and Yunfeng Liu.
\newblock Roformer: Enhanced transformer with rotary position embedding.
\newblock {\em Neurocomputing}, 568:127063, 2024.

\bibitem{vaswani2017attention}
Ashish Vaswani, Noam Shazeer, Niki Parmar, Jakob Uszkoreit, Llion Jones,
  Aidan~N. Gomez, Lukasz Kaiser, and Illia Polosukhin.
\newblock {Attention is All you Need}.
\newblock In Isabelle Guyon, Ulrike von Luxburg, Samy Bengio, Hanna~M. Wallach,
  Rob Fergus, S.~V.~N. Vishwanathan, and Roman Garnett, editors, {\em Advances
  in Neural Information Processing Systems 30: Annual Conference on Neural
  Information Processing Systems 2017, December 4-9, 2017, Long Beach, CA,
  {USA}}, pages 5998--6008, 2017.

\bibitem{velivckovic2024softmax}
Petar Veli{\v{c}}kovi{\'c}, Christos Perivolaropoulos, Federico Barbero, and
  Razvan Pascanu.
\newblock softmax is not enough (for sharp out-of-distribution).
\newblock {\em arXiv preprint arXiv:2410.01104}, 2024.

\bibitem{wei2022emergent}
Jason Wei, Yi~Tay, Rishi Bommasani, Colin Raffel, Barret Zoph, Sebastian
  Borgeaud, Dani Yogatama, Maarten Bosma, Denny Zhou, Donald Metzler, et~al.
\newblock Emergent abilities of large language models.
\newblock {\em arXiv preprint arXiv:2206.07682}, 2022.

\end{thebibliography}
\end{document}